\renewcommand{\refname}{References}
\renewcommand\bibsection{%
  \section*{{\refname}\@mkboth{\refname}{\refname}}%
}%
\newcommand{\set}[1]{\left\{#1\right\}}
\newcommand{\pr}[1]{\left(#1\right)}
\newcommand{\fpr}[1]{\mathopen{}\left(#1\right)}
\newcommand{\abs}[1]{{\left|#1\right|}}
\newcommand{\norm}[1]{\left\|#1\right\|}
\newcommand{\floor}[1]{\left\lfloor#1\right\rfloor}
\newcommand{\enpr}[2]{\pr{#1 ,\ldots , #2}}
\newcommand{\real}{\mathbb{R}}
\newcommand{\np}{\textbf{NP}}
\newcommand{\poly}{\textbf{P}}
\newcommand{\funcdef}[3]{{#1}:{#2} \to {#3}}
\newcommand{\define}{\leftarrow}
\newcommand{\fm}[1]{{\mathcal{#1}}}
\DeclareMathOperator*{\argmin}{arg\,min}
\DeclareMathOperator*{\argmax}{arg\,max}
\DeclareRobustCommand{\dispfunc}[2]{%
	\ensuremath{%
		\ifthenelse{\equal{#2}{}}%
			{\mathit{#1}}%
			{\mathit{#1}\fpr{#2}}}}
\newcommand{\lbound}[1]{\dispfunc{\ell b}{#1}}
\newcommand{\ubound}[1]{\dispfunc{ub}{#1}}
\newcommand{\onescore}[1]{\dispfunc{s_1}{#1}}
\newcommand{\infscore}[1]{\dispfunc{s_\infty}{#1}}
\newcommand{\regscore}[1]{\dispfunc{s_\mathit{reg}}{#1}}
\newcommand{\bigO}[1]{\dispfunc{\mathcal{O}}{#1}}
\newcommand{\dtname}[1]{\textsl{#1}}
\newcommand{\alginf}{\textsc{FindInf}\xspace}
\newcommand{\algreg}{\textsc{EnumReg}\xspace}
\newcommand{\algsum}{\textsc{FindSum}\xspace}
\newcommand{\algpeel}{\textsc{Peel}\xspace}
\newcommand{\alginfs}{\textsc{Inf}\xspace}
\newcommand{\algsums}{\textsc{Sum}\xspace}
\newcommand{\prbinf}{\textsc{InfBand}\xspace}
\newcommand{\prbreg}{\textsc{RegBand}\xspace}
\newcommand{\prbsum}{\textsc{SumBand}\xspace}
\newcommand{\prbclique}{\textsc{Clique}\xspace}
\newcommand{\prbminunion}{\textsc{MinUnion}\xspace}
\definecolor{yafaxiscolor}{rgb}{0.3, 0.3, 0.3}
\definecolor{yafcolor1}{rgb}{0.4, 0.165, 0.553}
\definecolor{yafcolor2}{rgb}{0.949, 0.482, 0.216}
\definecolor{yafcolor3}{rgb}{0.47, 0.549, 0.306}
\definecolor{yafcolor4}{rgb}{0.925, 0.165, 0.224}
\definecolor{yafcolor5}{rgb}{0.141, 0.345, 0.643}
\definecolor{yafcolor6}{rgb}{0.965, 0.933, 0.267}
\definecolor{yafcolor7}{rgb}{0.627, 0.118, 0.165}
\definecolor{yafcolor8}{rgb}{0.878, 0.475, 0.686}
\newlength{\yafaxispad}
\newlength{\yaftlpad}
\newlength{\yaflabelpad}
\newlength{\yafaxiswidth}
\newlength{\yafticklen}
\def\pgfplots@drawtickgridlines@INSTALLCLIP@onorientedsurf#1{}
\newcommand{\yafdrawaxis}[4]{
	\pgfplotstransformcoordinatex{#1}\let\xmincoord=\pgfmathresult 
	\pgfplotstransformcoordinatex{#2}\let\xmaxcoord=\pgfmathresult 
	\pgfplotstransformcoordinatey{#3}\let\ymincoord=\pgfmathresult 
	\pgfplotstransformcoordinatey{#4}\let\ymaxcoord=\pgfmathresult 
	\pgfsetlinewidth{\yafaxiswidth} 
	\pgfsetcolor{yafaxiscolor}
	\pgfpathmoveto{\pgfpointadd{\pgfpointadd{\pgfplotspointrelaxisxy{0}{0}}{\pgfqpointxy{\xmincoord}{0}}}{\pgfqpoint{-0.5\yafaxiswidth}{\yafaxispad}}}
	\pgfpathlineto{\pgfpointadd{\pgfpointadd{\pgfplotspointrelaxisxy{0}{0}}{\pgfqpointxy{\xmaxcoord}{0}}}{\pgfqpoint{0.5\yafaxiswidth}{\yafaxispad}}}
	\pgfpathmoveto{\pgfpointadd{\pgfpointadd{\pgfplotspointrelaxisxy{0}{0}}{\pgfqpointxy{0}{\ymincoord}}}{\pgfqpoint{\yafaxispad}{-0.5\yafaxiswidth}}}
	\pgfpathlineto{\pgfpointadd{\pgfpointadd{\pgfplotspointrelaxisxy{0}{0}}{\pgfqpointxy{0}{\ymaxcoord}}}{\pgfqpoint{\yafaxispad}{0.5\yafaxiswidth}}}
	\pgfusepath{stroke}
}
\pgfplotsset{axis y line=left, axis x line=bottom,
	tick align=outside,
	tickwidth=\yafticklen,
	clip = false,
    x axis line style= {-, line width = 0pt, color=black!0},
    y axis line style= {-, line width = 0pt, color=black!0},
    x tick style= {line width = \yafaxiswidth, color=yafaxiscolor, yshift = \yafaxispad},
    y tick style= {line width = \yafaxiswidth, color=yafaxiscolor, xshift = \yafaxispad},
    x tick label style = {font=\scriptsize, yshift = \yaftlpad, inner xsep = 0pt},
    y tick label style = {font=\scriptsize, xshift = \yaftlpad},
    every axis y label/.style = {at = {(ticklabel cs:0.5)}, rotate=90, anchor=center, font=\scriptsize, yshift = -\yaflabelpad, inner sep = 0pt},
    every axis x label/.style = {at = {(ticklabel cs:0.5)}, anchor=center, font=\scriptsize, yshift = \yaflabelpad},
    x tick label style = {font=\scriptsize, yshift = 1pt},
    grid = major,
    major grid style  = {dash pattern = on 1pt off 3 pt},
	every axis plot post/.append style= {line width=\yafaxiswidth} ,
	legend cell align = left,
	legend style = {inner sep = 1pt, cells = {font=\scriptsize}},
	legend image code/.code={%
		\draw[mark repeat=2,mark phase=2,#1] 
		plot coordinates { (0cm,0cm) (0.15cm,0cm) (0.3cm,0cm) };%
	} 
}
\begin{document}
\title{Approximation algorithms for confidence bands for time series}
\author{Nikolaj Tatti \orcidID{0000-0002-2087-5360}}
\institute{University of Helsinki, Finland, \email{nikolaj.tatti@helsinki.fi}}

\tocauthor{Nikolaj~Tatti}
\toctitle{Approximation algorithms for confidence bands for time series}

\maketitle              
\begin{abstract}

Confidence intervals are a standard technique for analyzing data.
When applied to time series,
confidence intervals are computed for each time point separately.
Alternatively, we can compute confidence bands, where we are
required to find the smallest area enveloping $k$ time series, where $k$ is a user
parameter. Confidence bands can be then used to detect abnormal time series, not
just individual observations within the time series.
We will show that despite being an \np-hard problem it is possible to find
optimal confidence band for some $k$. We do this by considering a different
problem: discovering regularized bands, where we minimize the envelope area minus the number of included time
series weighted by a parameter $\alpha$. Unlike normal confidence bands we can
solve the problem exactly by using a minimum cut. By varying $\alpha$ we can obtain solutions
for various $k$. If we have a constraint $k$ for which we cannot find appropriate $\alpha$, we
demonstrate a simple algorithm that yields $\bigO{\sqrt{n}}$ approximation guarantee by connecting
the problem to a minimum $k$-union problem. This connection also implies that we cannot
approximate the problem better than $\bigO{n^{1/4}}$ under some (mild) assumptions. Finally, we consider a variant
where instead of minimizing the area we minimize the maximum width.
Here, we demonstrate a simple 2-approximation algorithm and show that we cannot achieve
better approximation guarantee.

\end{abstract}

\section{Introduction}

Confidence intervals are a common tool to summarize the underlying
distribution, and to indicate outlier behaviour. In this paper we will
study the problem of computing confidence intervals for time series.

\citet{korpela2014confidence} proposed a notion for
computing confidence intervals: instead of computing point-wise 
confidence intervals, the authors propose computing confidence bands.
More formally, given $n$ time series $T$, we are asked to find $k$ time series $U \subseteq T$
that minimize the envelope area, that is, the sum $\sum_i \pr{\max_{t \in U} t(i)} - \pr{\min_{t \in U} t(i)}$.
The benefit, as argued by \citet{korpela2014confidence}, of using
confidence bands instead of point-wise confidence intervals
is better family-wise error control: if we were to use point-wise intervals
we can only say that a time series \emph{at some fixed point} is an outlier
and require a correction for multiple testing (such as Bonferroni correction)
if we want to state with a certain probability that the \emph{whole} time series is normal.

In this paper we investigate the approximation algorithms for finding
confidence bands. While \citet{korpela2014confidence} proved that
finding the optimal confidence band is an \np-hard problem, they did not
provide any approximation algorithms nor any inapproximability results.

We will first show that despite being an \np-hard problem, we can
solve the problem for \emph{some} $k$. We do this by considering a different
problem, where instead of having a hard constraint we have an objective
function that prefers selecting time series as long as they do not increase
the envelope area too much. The objective depends on the parameter $\alpha$,
larger values of $\alpha$ allow more increase in the envelope area.
We will show that this problem can be solved exactly in polynomial time and that 
each $\alpha$ correspond to a certain value of $k$. We will show that there are
at most $n + 1$ of such bands, and that
we can discover all of them in polynomial time by varying $\alpha$.

Next, we provide a simple algorithm for approximating confidence bands by
connecting the problem to the weighted $k$-\prbminunion problem. We will provide a variant
of an algorithm by~\citet{chlamtac2018densest} that yields $\sqrt{n} + 1$
guarantee. We also argue that---under certain conjecture---we cannot approximate
the problem better than  $\bigO{n^{1/4}}$. 

Finally, we consider a variant of the problem where instead of minimizing
the envelope area, we minimize the width of the envelope, that is, we minimize
the maximum difference between the envelope boundaries. We show that a simple
algorithm can achieve 2-approximation. This approximation provides interesting
contrast to the inapproximability results when minimizing the envelope area.
Surprisingly this guarantee is tight: we will also show
that the there is no polynomial-time algorithm with smaller guarantee unless $\poly=\np$.

The remainder of the paper is organized as follows. We define the optimization problems
formally in Section~\ref{sec:prel}. We solve the regularized band problem in Section~\ref{sec:reg},
approximate minimization of envelope area in Section~\ref{sec:sum}, and
approximate minimization of envelope width in Section~\ref{sec:max}.
Section~\ref{sec:related} is devoted to the related work. We present our experiments
in Section~\ref{sec:exp} and conclude with discussion in Section~\ref{sec:conclusions}.

\section{Preliminaries and problem definitions} \label{sec:prel}

Assume that we are given time series $T$ with each time series
$\funcdef{f}{D}{\real}$ mapping from domain $D$ to a real number. We will often
write $n = \abs{T}$ to be the number of given time series, and $m = \abs{D}$ to
mean the size of the domain.

Given a set of time series $T$, we define the upper and lower \emph{envelopes} as
\[
	\ubound{T, i} = \max_{t \in T} t(i) \quad\text{and}\quad
	\lbound{T, i} = \min_{t \in T} t(i) \quad .
\]

Our main goal is to find $k$ time series that minimize the envelope area.
\begin{problem}[\prbsum]
Given a set of $n$ time series $T = \enpr{t_1}{t_n}$, an integer $k \leq n$, and a time series $x \in T$
find $k$ time series $U \subseteq T$ containing $x$ minimizing
\[
	\onescore{U} = \sum_{i} \ubound{U, i} - \lbound{U, i}\quad.
\]
\end{problem}
We will refer to $U$ as \emph{confidence bands}. 

Note that the we also require that we must specify at least one sequence $x \in
T$ that must be included in the input whereas the original definition of the
problem given by~\citet{korpela2014confidence}  did not require specifying $x$.  As we will see later, this requirement
simplifies the computational problem.
On the other hand, if we do not have $x$ at hand, then we can either test every
$t \in T$ as $x$, or we can use the mean or the median of $T$. We will use the
latter option as it does not increase the computational complexity and at the
same time is a reasonable assumption. Note that in this case most likely $x \notin T$,
so we define $T' = T \cup \set{x}$, increase $k' = k + 1$, and solve \prbsum for $T'$ and $k'$ instead.

We can easily show that the area function $\onescore{\cdot}$ is a submodular
function for all non-empty subsets, that is,
\[
	\onescore{U \cup \set{t}} - \onescore{U} \leq \onescore{W \cup \set{t}} - \onescore{W},
\]
where $U \supseteq W \neq \emptyset$. In other words, adding $t$ to a larger set $U$ increases the cost
less than adding $t$ to $W$.

We also consider a variant of \prbsum where instead of minimizing the area of the envelope, we will minimize
the maximum width.
\begin{problem}[\prbinf]
Given a set of $n$ time series $T = \enpr{t_1}{t_n}$, an integer $k \leq n$, and a time series $x \in T$,
find $k$ time series $U \subseteq T$ containing $x$ minimizing
\[
	\infscore{U} = \max_{i} \ubound{U, i} - \lbound{U, i}\quad.
\]
\end{problem}

We will show that we can 2-approximate \prbinf and that the ratio is tight. 

Finally, we consider a regularized version of \prbsum, where instead of requiring that the set has a minimum size $k$,
we add a term $- \alpha \abs{U}$ into the objective function. In other words, we will favor larger sets as long as
the area $\onescore{U}$ does not increase too much.

\begin{problem}[\prbreg]
Given a set of $n$ time series $T = \enpr{t_1}{t_n}$, a number $\alpha > 0$, and a time series $x \in T$,
find a subset $U \subseteq T$ containing $x$ minimizing
\[
	\regscore{U; \alpha} = \onescore{U} - \alpha \abs{U} \quad.
\]
In case of ties, use $\abs{U}$ as a tie-breaker, preferring larger values.
\end{problem}

We refer to the solutions of \prbreg as regularized bands.
It turns out that \prbreg can be solved in polynomial time.
Moreover, the solutions we obtain from \prbreg will be useful for approximating
\prbsum.

\section{Regularized bands}\label{sec:reg}

In this section we will list useful properties of
of \prbreg, show how can we solve \prbreg in polynomial time for a single $\alpha$,
and finally demonstrate how we can discover \emph{all} regularized bands by varying $\alpha$.

\subsection{Properties of regularized bands}

Our first observation is that the output of \prbreg also solves \prbsum for
certain size constraints.

\begin{proposition}
\label{prop:optimal}
Assume time series $T$ and $\alpha > 0$. Let $U$ be a solution to $\prbreg(\alpha)$.
Then $U$ is also a solution for $\prbsum$ with $k = \abs{U}$.
\end{proposition}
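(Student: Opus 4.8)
The plan is to argue directly, exploiting the fact that the regularization term $-\alpha\abs{U}$ becomes a constant once the cardinality of $U$ is fixed. The feasible sets of $\prbreg(\alpha)$ and of $\prbsum$ agree up to the size restriction: both range over subsets of $T$ containing $x$. Hence any competitor of fixed size $k$ for $\prbsum$ is simultaneously a competitor for $\prbreg(\alpha)$, and on this common slice of size-$k$ sets the two objectives differ only by the additive constant $\alpha k$. Minimizing $\regscore{\cdot; \alpha}$ and minimizing $\onescore{\cdot}$ therefore coincide on that slice.

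Concretely, I would let $U$ be a solution to $\prbreg(\alpha)$ and set $k = \abs{U}$. Suppose, toward a contradiction, that $U$ does not solve $\prbsum$ for this $k$; then there exists $W \subseteq T$ with $x \in W$, $\abs{W} = k$, and $\onescore{W} < \onescore{U}$. Subtracting the common term $\alpha k$ from both sides gives
\[
	\regscore{W; \alpha} = \onescore{W} - \alpha k < \onescore{U} - \alpha k = \regscore{U; \alpha},
\]
so $W$ achieves a strictly smaller regularized score than $U$ while still containing $x$. This contradicts the optimality of $U$ for $\prbreg(\alpha)$, which completes the argument.

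The only point requiring a little care is the tie-breaking convention in $\prbreg$, which prefers larger $\abs{U}$ among solutions of equal regularized score. This convention never interferes here: the comparison above is strictly between two sets of the same cardinality $k$, so the size tie-breaker cannot distinguish them and the strict inequality alone already yields the contradiction. In short, there is no substantial obstacle — the entire content of the proposition is the observation that fixing $\abs{U} = k$ turns the regularized objective into the area objective plus a constant, after which optimality transfers immediately.
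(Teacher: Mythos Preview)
Your proof is correct and is exactly the trivial argument the paper has in mind; the paper itself omits the proof as trivial, and your direct comparison of $\onescore{\cdot}$ and $\regscore{\cdot;\alpha}$ on size-$k$ subsets is precisely that argument.
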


The proof of this proposition is trivial and is omitted.

Our next observation is that the solutions to \prbreg form a chain.

\begin{proposition}
\label{prop:chain}
Assume time series $T$ and $0 < \alpha < \beta$.
Let $V$ be a solution to $\prbreg(T, \alpha)$ and let $U$ be a solution to $\prbreg(T, \beta)$.
Then $V \subseteq U$. 
\end{proposition}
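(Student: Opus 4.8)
The plan is to reduce the claim to the submodularity of the area function $\onescore{\cdot}$ already recorded above, via a standard uncrossing argument. The key observation is that the regularized objective $\regscore{\cdot; \gamma}$ is itself submodular for every $\gamma > 0$, because the penalty $-\gamma\abs{U}$ is modular and hence leaves the diminishing-returns inequality unchanged. I would then compare the two optima on their intersection $V \cap U$ and their union $V \cup U$; since both $V$ and $U$ contain $x$, so do $V \cap U$ and $V \cup U$, and both are therefore feasible for the respective instances.

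Concretely, I would first write down two optimality inequalities, pairing each optimum with the \emph{opposite} extremal set. Since $V$ is optimal for $\prbreg(T,\alpha)$ and $V\cap U$ is feasible there, $\onescore{V} - \alpha\abs{V} \le \onescore{V \cap U} - \alpha\abs{V\cap U}$; since $U$ is optimal for $\prbreg(T,\beta)$ and $V\cup U$ is feasible there, $\onescore{U} - \beta\abs{U} \le \onescore{V\cup U} - \beta\abs{V\cup U}$. Adding these two inequalities and invoking the submodularity inequality $\onescore{V\cap U} + \onescore{V\cup U} \le \onescore{V} + \onescore{U}$ cancels every area term, leaving $\alpha\abs{V\cap U} + \beta\abs{V\cup U} \le \alpha\abs{V} + \beta\abs{U}$.

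Next I would use the modular identity $\abs{V\cap U} + \abs{V\cup U} = \abs{V} + \abs{U}$ to eliminate $\abs{V\cup U}$, which collapses the previous line to $(\alpha - \beta)\abs{V\cap U} \le (\alpha-\beta)\abs{V}$, that is $(\alpha-\beta)\pr{\abs{V\cap U} - \abs{V}} \le 0$. Since $\alpha < \beta$, the factor $\alpha-\beta$ is strictly negative, so $\abs{V\cap U} \ge \abs{V}$. Combined with the trivial bound $\abs{V\cap U} \le \abs{V}$, this forces $V\cap U = V$, i.e.\ $V \subseteq U$, as claimed. Note that this conclusion uses only that $V$ and $U$ are minimizers, so it holds for any pair of optimal solutions regardless of the tie-breaking rule. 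I would also flag that the asymmetric pairing is essential: pairing $V$ with $V\cup U$ instead yields only a vacuous inequality.

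The one point that needs care — and which I expect to be the main, if minor, obstacle — is that the submodularity recorded earlier is stated in the diminishing-returns form and only for non-empty subsets, whereas the argument uses the lattice form $\onescore{V\cap U} + \onescore{V\cup U} \le \onescore{V} + \onescore{U}$. I would bridge the two by the usual telescoping over the elements of $U \setminus V$, checking that every intermediate set in the telescope contains $V\cap U \supseteq \set{x}$ and is therefore non-empty, so that the restricted diminishing-returns inequality applies term by term. The same remark that $x$ lies in all four sets is what guarantees $V\cap U$ and $V\cup U$ are admissible inputs to the respective instances, so no feasibility issue arises.
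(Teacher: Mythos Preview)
Your proof is correct and is essentially the same uncrossing argument as the paper's: both compare the optima against $V\cap U$ and $V\cup U$ and use submodularity of $\onescore{\cdot}$ together with $\alpha<\beta$. The only difference is presentational---the paper chains the inequalities into a contradiction, whereas you add the two optimality inequalities and cancel; your observation that tie-breaking is not needed is a nice bonus the paper leaves implicit.
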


\begin{proof}
Assume otherwise.
Let $W = V \setminus U$. Due to the optimality of $V$,
\[
	0 \geq  \regscore{V; \alpha} -  \regscore{V \cap U; \alpha}
	= \onescore{V} - \onescore{V \cap U} - \alpha \abs{W}\quad.
\]
Since $\onescore{}$ is a submodular function, we have
\[
	\onescore{V} - \onescore{V \cap U} =
	\onescore{W \cup (V \cap U)} - \onescore{V \cap U}
	\geq \onescore{W \cup U} - \onescore{U}\quad.
\]
Combining these inequalities leads to
\[
\begin{split}
	0 & \geq \onescore{W \cup U} - \onescore{U} - \alpha \abs{W} \\
	& \geq \onescore{W \cup U} - \onescore{U} - \beta \abs{W} \\
	& = \regscore{W \cup U; \beta} - \regscore{U; \beta},
\end{split}
\]
which contradicts the optimality of $U$.\qed
\end{proof}

This property is particularly useful as it allows clean visualization: the
envelopes resulting from  different values of $\alpha$ will not intersect.
Moreover, it allows us to stored all regularized bands by simply storing, per each time series,
the index of the largest confidence band containing the time series.

Interestingly, this result does not hold for $\prbsum$.

\begin{example}
Consider 4 constant time series $t_1 = 0$, $t_2 = -1$ and $t_3 = t_4 = 2$.
Set the seed time series $x = t_1$.
Then the solution for \prbsum with $k = 2$ is $\set{t_1, t_2}$ and the solution
\prbsum with $k = 3$ is $\set{t_1, t_3, t_4}$.
\end{example}

\subsection{Computing regularized band for a single $\alpha$}

Our next step is to solve $\prbreg$ in polynomial time.
Note that since $\onescore{\cdot}$ is submodular, then so is $\regscore{\cdot}$.
Minimizing submodular function is solvable in polynomial-time~\citep{schrijver2000combinatorial}.
Solving \prbreg using a generic solver for minimizing submodular functions is slow, so instead
we will solve the problem by reducing it to a minimum cut problem. In such a problem, we are given
a weighted directed graph $G = (V, E, W)$, two nodes, say $\theta, \eta \in V$, and ask to partition $V$ into $X \cup Y$
such that $\theta \in X$ and $\eta \in Y$ minimizing
the total weight of edges from $X$ to $Y$.

In order to define $G$ we need several definitions.
Assume we are given $n$ time series $T$, a real number $\alpha$
and a seed time series $x \in T$. Let $m$ be the size of the domain.
For $i \in [m]$, we define $p_i = \set{t_j(i) \mid j \in [n]}$ to be the set (with no duplicates) sorted, smallest values first.
In other words, $p_{ij}$ is the $j$th smallest distinct observed value in $T$ at $i$. Let $P$ be the collection of all $p_i$.

We also define $c_{ij}$ to be the number of time series at $i$ smaller than or equal to $p_{ij}$, that is,
	$c_{ij} = \abs{\set{\ell \in [n] \mid t_\ell(i) \leq p_{ij}}}$. 
We also write $c_{i0} = 0$.

We are now ready to define our graph.
We define a weighted directed graph $G = (V, E, W)$ as follows. The nodes $V$ have
three sets $A$, $B$, and $C$. The set $A$ has $\abs{P}$ nodes, a node $a_{ij} \in A$
corresponding to each entry $p_{ij} \in P$.
The set $B = \set{b_j}$ has $n$ nodes, and the set $C$ has two nodes, $\theta$ and $\eta$.
Here, $\theta$ acts as a source node and $\eta$ acts as a terminal node.

The edges and the weights are as follows:
For each $a_{ij} \in A$ such that $p_{ij} > x(i)$, we add an edge $(a_{i(j - 1)}, a_{ij})$ with the weight
\[
	w(a_{i(j - 1)}, a_{ij}) = n - c_{i(j - 1)} + \frac{m}{\alpha}(p_{i(j - 1)} - x(i))\quad.
\]
For each $a_{ij} \in A$ such that $p_{ij} < x(i)$, we add an edge $(a_{i(j + 1)}, a_{ij})$ with the weight
\[
	w(a_{i(j + 1)}, a_{ij}) = c_{ij} + \frac{m}{\alpha}(x(i) - p_{i(j + 1)})\quad.
\]
For each $a_{ij} \in A$ such that $p_{ij} = x(i)$, we add an edge $(\theta, a_{ij})$ with the weight $\infty$.
For each $i \in [m]$ and $\ell = \abs{p_i}$, we add two edges $(a_{i\ell}, \eta)$ and $(a_{i1}, \eta)$ with the weights
\[
	w(a_{i\ell}, \eta) = \frac{m}{\alpha}(p_{i\ell} - x(i)) \quad\text{and}\quad
	w(a_{i1}, \eta) = \frac{m}{\alpha}(x(i) - p_{i1})\quad.
\]

In addition, for each $i \in [m]$, $\ell \in [n]$, let $j$ be such that $p_{ij} = t_\ell(i)$ and 
define two edges $(a_{ij}, b_\ell)$ and $(b_\ell, a_{ij})$ with the weights,
\begin{align*}
	w(a_{ij}, b_\ell) & = 1 &
	w(b_\ell, a_{ij}) & = \infty\quad.
\end{align*}

Our next proposition states the minimum cut of $G$ also minimizes \prbreg.

\begin{proposition}
\label{prop:cut}
Let $X, Y$ be a $(\theta, \eta)$-cut of $G$ with the optimal cost.
Define $f(i) = \min_j \set{p_{ij} \mid a_{ij} \in X}$
and $g(i) = \max_j \set{p_{ij} \mid a_{ij} \in X}$.

Then the cost of the cut is equal to
\[
	nm - m\abs{\set{j \mid b_j \in X}} + \frac{m}{\alpha}\sum_{i} g(i) - f(i)\quad.
\]
Moreover, if $b_\ell \in X$, then $g(i) \leq b_\ell(i) \leq f(i)$, for all $i$.
\end{proposition}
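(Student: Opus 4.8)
The plan is to evaluate the optimal cut edge by edge, after first fixing the shape that every finite-cost cut must take. I would begin with the infinite-weight edges, none of which an optimal (hence finite-cost) cut can cross. The edges $(\theta, a_{ij})$ with $p_{ij} = x(i)$ put the node at height $x(i)$ on the $X$-side for every coordinate $i$, so $f(i)$ and $g(i)$ are well defined and $f(i) \le x(i) \le g(i)$. The edges $(b_\ell, a_{ij})$ with $p_{ij} = t_\ell(i)$ force $a_{ij} \in X$ whenever $b_\ell \in X$; hence $t_\ell(i) \in \set{p_{ij'} \mid a_{ij'} \in X}$, which gives $f(i) \le t_\ell(i) \le g(i)$ for all $i$. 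This is exactly the second assertion of the proposition, and it also yields $\abs{\set{i \mid f(i) \le t_\ell(i) \le g(i)}} = m$ for every $\ell$ with $b_\ell \in X$, a fact I will use to collapse the count at the end.

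Next I would show that in an optimal cut the $X$-side of each chain $a_{i1}, \ldots, a_{i\abs{p_i}}$ is a single interval straddling $x(i)$; this is the step I expect to be the crux. The argument is an exchange: if a gap sat above $x(i)$, say $a_{ij} \in X$ and $a_{i(j+1)} \in Y$ with a higher node still in $X$, then moving $a_{i(j+1)}$ into $X$ deletes the cut edge $(a_{ij}, a_{i(j+1)})$, whose weight is at least $n - c_{ij}$, while adding only the weight-one edges toward the $b_\ell$ with $t_\ell(i) = p_{i(j+1)}$, of which there are $c_{i(j+1)} - c_{ij} \le n - c_{ij}$. The move therefore never increases the cost, and a symmetric argument handles a gap below $x(i)$; I would check that the inequality is strict except in a degenerate configuration that cannot coexist with a higher $X$-node, so that no optimal cut has a gap at all. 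The only extra bookkeeping is the two terminal edges $(a_{i\abs{p_i}}, \eta)$ and $(a_{i1}, \eta)$, which take over the role of the boundary edge when the interval reaches the top or bottom of its chain.

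With contiguity established the cut splits into three parts. At the upper boundary exactly one edge is cut — the internal edge leaving the top of $X$, or the terminal edge $(a_{i\abs{p_i}}, \eta)$ when $X$ reaches the top — and in either case its weight is the number of series at $i$ with value strictly above $g(i)$ plus $\frac{m}{\alpha}(g(i) - x(i))$; symmetrically the lower boundary contributes the number of series strictly below $f(i)$ plus $\frac{m}{\alpha}(x(i) - f(i))$. Summing both boundaries over $i$ gives $nm + \frac{m}{\alpha}\sum_i (g(i) - f(i)) - \sum_i \abs{\set{\ell \mid f(i) \le t_\ell(i) \le g(i)}}$, since at each $i$ the two counts add to $n$ minus the number of series inside the band. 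The third part is the weight-one edges $(a_{ij}, b_\ell)$ that are cut, namely those with $a_{ij} \in X$ and $b_\ell \in Y$; by contiguity there are $\sum_{\ell \,:\, b_\ell \in Y}\abs{\set{i \mid f(i) \le t_\ell(i) \le g(i)}}$ of them. Adding this to the boundary total cancels the inside-band contribution of every unselected series and leaves $-\sum_{\ell \,:\, b_\ell \in X}\abs{\set{i \mid f(i) \le t_\ell(i) \le g(i)}}$; by the first step each summand equals $m$, so the total is $nm - m\abs{\set{j \mid b_j \in X}} + \frac{m}{\alpha}\sum_i (g(i) - f(i))$, as required.
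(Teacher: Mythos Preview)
Your outline follows the paper's proof almost step for step: establish the second assertion from the infinite-weight edges, argue that the $X$-side of each chain is an interval by an exchange, then split the cut cost into the two boundary edges and the weight-one edges between $A$ and $B$ and combine. The paper does exactly this, though it is far terser on the contiguity step, dispatching it in a single sentence (``otherwise we can move $a_{ij}$ to $X$ and decrease the cost'') and then writing out essentially the same two summations you describe.

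The one place your more detailed version stumbles is the exchange for contiguity. You move a single node $a_{i(j+1)}$ from $Y$ to $X$ and claim this ``adds only the weight-one edges''. That is true only when $a_{i(j+2)} \in X$, i.e.\ when the gap is a single node. If the gap is longer and $a_{i(j+2)} \in Y$, then promoting $a_{i(j+1)}$ to $X$ also creates the new cut edge $(a_{i(j+1)}, a_{i(j+2)})$; doing the bookkeeping, the net change in cost is
\[
-(n - c_{ij}) - \tfrac{m}{\alpha}(p_{ij} - x(i)) + (n - c_{i(j+1)}) + \tfrac{m}{\alpha}(p_{i(j+1)} - x(i)) + (c_{i(j+1)} - c_{ij}) = \tfrac{m}{\alpha}\bigl(p_{i(j+1)} - p_{ij}\bigr) > 0,
\]
so the single-node move can strictly \emph{increase} the cost. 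The fix is to move the entire gap $a_{i(j+1)}, \ldots, a_{ij'}$ to $X$ in one shot (where $a_{i(j'+1)} \in X$): then only the edge $(a_{ij}, a_{i(j+1)})$ is removed, no new chain edge is added, and the weight-one edges contribute at most $c_{ij'} - c_{ij} < n - c_{ij}$, giving a strict decrease. With that correction, the rest of your calculation goes through exactly as written and matches the paper.
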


\begin{proof}
The last claim follows immediately as otherwise there is a cross-edge with infinite cost
making the cut suboptimal.

Define
$u(i) = \argmin_j \set{p_{ij} \mid a_{ij} \in X}$
and
$v(i) = \argmax_j \set{p_{ij} \mid a_{ij} \in X}$
to be the indices yielding $f$ and $g$. 
Define also
\[
	d_i = \abs{\set{j \mid u(i) \leq t_j(i) \leq v(i)}} = c_{iv(i)} - c_{i(u(i) - 1)}
\]
to be the number of time series between $u(i)$ and $v(i)$ at $i$.

Note that $a_{ij} \in X$ whenever $u(i) \leq j \leq v(j)$ as otherwise we can move $a_{ij}$
to $X$ and decrease the cost.

The cut consists of the cross-edges originating from $a_{iv(i)}$ and $a_{iu(i)}$,
and cross-edges between $A$ and $B$. 
The cost of the former is equal to
\[
\begin{split}	
	& \sum_i n - c_{iv(i)} + m\frac{p_{iv(i)} - x(i)}{\alpha} + c_{i(u(i) - 1)} + m\frac{x(i) - p_{iu(i)}}{\alpha} \\
	& \qquad = nm + \sum \frac{m}{\alpha} (g(i) - f(i)) - \sum_i d_i
\end{split}	
\]
while the cost of the latter is
\[
\begin{split}	
	 \sum_i \abs{\set{j \mid a_{ij} \in X, b_j \notin X}}
	 & = \sum_i \abs{\set{j \mid u(i) \leq t_j(i) \leq v(i) \in X, b_j \notin X}} \\
	 & = \sum_i d_i - \abs{\set{j \mid u(i) \leq t_j(i) \leq v(i) \in X, b_j \in X}} \\
	 & = \sum_i d_i - m\abs{\set{j \mid b_j \in X}} \quad.\\
\end{split}	
\]
Combining the two equations proves the claim.\qed
\end{proof}

\begin{corollary}
Let $U'$ be the solution to $\prbreg(\alpha)$.
Let $(X, Y)$ be a minimum $(\theta, \eta)$-cut of $G$. Set $U = \set{t_\ell \mid b_\ell \in X}$.
Then $\regscore{U; \alpha} = \regscore{U'; \alpha}$.
\end{corollary}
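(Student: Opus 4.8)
The plan is to show that the minimum cut value of $G$ is exactly $nm + \frac{m}{\alpha}\regscore{U'; \alpha}$, and that the band $U$ read off from an optimal cut attains this value; squeezing $U$ between the cut and the optimum $U'$ then forces $\regscore{U; \alpha} = \regscore{U'; \alpha}$. I would establish the two matching bounds separately, using Proposition~\ref{prop:cut} as the bridge between cut costs and regularized scores, and the normalization factor $m/\alpha$ to line the two quantities up.

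For the lower bound, let $(X, Y)$ be an optimal cut and $U = \set{t_\ell \mid b_\ell \in X}$. By Proposition~\ref{prop:cut} its cost equals $nm - m\abs{U} + \frac{m}{\alpha}\sum_i g(i) - f(i)$, and the containment statement of that proposition places every selected $t_\ell$ inside the interval from $f(i)$ to $g(i)$ at each $i$. Hence $\ubound{U, i} - \lbound{U, i} \leq g(i) - f(i)$, so $\onescore{U} \leq \sum_i g(i) - f(i)$ and the cost is at least $nm + \frac{m}{\alpha}\regscore{U; \alpha}$. Before comparing with $U'$ I must check that $U$ is feasible, i.e.\ $x \in U$: the infinite-weight edges $(\theta, a_{ij})$ with $p_{ij} = x(i)$ force the seed's $A$-nodes into $X$, after which placing the corresponding $b$-node in $X$ severs no infinite edge and strictly lowers the cost, since it removes the $m$ weight-one edges that would otherwise cross the cut. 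Thus an optimal cut keeps that node in $X$, $U$ is a valid solution of $\prbreg(\alpha)$, and $\regscore{U; \alpha} \geq \regscore{U'; \alpha}$, giving a cut cost of at least $nm + \frac{m}{\alpha}\regscore{U'; \alpha}$.

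For the upper bound I would build an explicit cut from $U'$: place in $X$ all $a_{ij}$ with $\lbound{U', i} \leq p_{ij} \leq \ubound{U', i}$, together with $\theta$ and every $b_\ell$ such that $t_\ell \in U'$, and everything else in $Y$. Since $x \in U'$, the forced seed nodes lie in $X$ and no infinite edge is cut; the $A$-part is contiguous and every selected $b_\ell$ sits inside the envelope, so this cut has exactly the structure analysed in Proposition~\ref{prop:cut}, with $f(i) = \lbound{U', i}$, $g(i) = \ubound{U', i}$, and $\abs{\set{j \mid b_j \in X}} = \abs{U'}$. The same computation then gives cost $nm - m\abs{U'} + \frac{m}{\alpha}\onescore{U'} = nm + \frac{m}{\alpha}\regscore{U'; \alpha}$. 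Combining the two bounds, the minimum cut cost equals $nm + \frac{m}{\alpha}\regscore{U'; \alpha}$, and feeding this back into the lower-bound chain collapses every inequality into an equality, so $\regscore{U; \alpha} = \regscore{U'; \alpha}$.

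I expect the main obstacle to be the bookkeeping in the upper bound: the cost formula of Proposition~\ref{prop:cut} is phrased for the optimal cut, so I must confirm that the constructed interval-shaped cut meets the structural hypotheses its derivation relies on, namely contiguity of $X \cap A$ around the seed and the coincidence $n - c_{i\ell} = 0$ at the top node that lets the chain edges and the edges into $\eta$ be handled by one formula. A secondary care point is the direction of the containment inequality in Proposition~\ref{prop:cut}, which I read as $f(i) \leq t_\ell(i) \leq g(i)$, i.e.\ the selected series lie within the envelope; this is exactly what the bound $\onescore{U} \leq \sum_i g(i) - f(i)$ in the lower-bound step requires.
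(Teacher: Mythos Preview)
Your proposal is correct and follows essentially the same two-sided argument as the paper: read off $U$ from the optimal cut via Proposition~\ref{prop:cut}, and build a competing cut from $U'$ to get the matching bound. You are in fact more careful than the paper, which asserts directly that the minimum cut cost equals $nm + \frac{m}{\alpha}\regscore{U;\alpha}$ and that the constructed cut from $U'$ has cost $nm + \frac{m}{\alpha}\regscore{U';\alpha}$, leaving implicit the points you spell out (the inequality $\onescore{U}\le\sum_i g(i)-f(i)$, the feasibility $x\in U$, and that the cost computation in the proof of Proposition~\ref{prop:cut} applies to the non-optimal interval-shaped cut built from $U'$).
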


\begin{proof}
Proposition~\ref{prop:cut} states that the cost of the minimum cut
is $nm + \frac{m}{\alpha}\regscore{U; \alpha}$.

Construct a cut $(X', Y')$ from $U'$ by setting $X'$ to be the nodes from $A$ and $B$ 
that correspond to the time series $U'$. The proof of Proposition~\ref{prop:cut} 
now states that the cut is equal to $nm + \frac{m}{\alpha}\regscore{U'; \alpha}$.

The optimality of $(X, Y)$ proves the claim.\qed
\end{proof}

We may encounter a pathological case, where we have multiple cuts with the same optimal cost.
\prbreg requires that in such case we use largest solution. This can be enforced
by modifying the weights: first scale the weights so that they are all multiples of $nm + 1$, then
add 1 to the weight of each $(\theta, \alpha_{ij})$. The cut with the modified graph yields
the largest band with the optimal cost.

The constructed graph $G$ has $\bigO{nm}$ nodes and $\bigO{nm}$ edges.
Consequently, we can compute the minimum cut in $\bigO{(nm)^2}$ time~\citep{orlin2013max}.
In practice, solving minimum cut is much faster.

\subsection{Computing all regularized bands}

Now that we have a method for solving $\prbreg(\alpha)$ for a fixed $\alpha$, we would like to
find solutions for all $\alpha$.
Note that Proposition~\ref{prop:chain} states that we can have at most $n + 1$ different bands.

We can enumerate the bands with the divide-and-conquer approach given in Algorithm~\ref{alg:reg}.
Here, we are given two, already discovered, regularized bands $U \subsetneq V$, and we try to find a middle band $W$ with $U \subsetneq W \subsetneq V$.
If $W$ exists, we recurse on both sides. To enumerate all bands, we start with $\algreg(\set{x}, V)$.

\begin{algorithm}
\caption{$\algreg(U, V)$ finds all regularized bands between $U$ and $V$}
\label{alg:reg}
$\gamma \define \frac{\onescore{V} - \onescore{U}}{\abs{V} - \abs{U}} - \frac{\Delta}{n^2}$\;
$W \define $ solution to $\prbreg(\gamma)$\;
\If {$U \neq W$} {
	report $W$\;
	$\algreg(U, W)$;
	$\algreg(W, V)$\;
}
\end{algorithm}

The following proposition proves the correctness of the algorithm:
during each split we will always find a new band if such exist.

\begin{proposition}
\label{prop:split}
Assume time series $T$ with $n$ time series.
Let $\set{U_i}$ be all the possible regularized confidence bands ordered using inclusion.
Define 
\[
	\Delta = \min \set{\abs{t(i) - u(i)} \mid t, u \in T, i, t(i) \neq u(i)} \quad.
\]
Let $i < j$ be two integers and define
\[
	\gamma = \frac{\onescore{U_j} - \onescore{U_i}}{\abs{U_j} - \abs{U_i}} - \frac{\Delta}{n^2}\quad.
\]
Let $U_\ell$ be the solution for $\prbreg(\gamma)$.
Then $i \leq \ell < j$.
If $j > i + 1$, then $i < \ell$, otherwise $\ell = i$.
\end{proposition}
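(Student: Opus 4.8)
The plan is to recast $\prbreg$ as a parametric problem and read off the structure of its solutions from a lower convex hull. For a subset $U \ni x$, the value $\regscore{U;\alpha} = \onescore{U} - \alpha\abs{U}$ is, as a function of $\alpha$, a line of slope $-\abs{U}$ and intercept $\onescore{U}$; hence $\min_U \regscore{U;\alpha}$ is concave and piecewise linear in $\alpha$, and the bands that are ever optimal are exactly the vertices of the lower convex hull of the points $\pr{\abs{U}, \onescore{U}}$. Writing the bands as $U_0 \subsetneq U_1 \subsetneq \cdots$ (consistent with Proposition~\ref{prop:chain}), setting $k_\ell = \abs{U_\ell}$ and $\sigma_\ell = \frac{\onescore{U_{\ell+1}} - \onescore{U_\ell}}{k_{\ell+1} - k_\ell}$, the hull property gives $\sigma_0 < \sigma_1 < \cdots$, and with the stated tie-breaking rule (which at a breakpoint prefers the larger band) $U_\ell$ is the solution of $\prbreg(\alpha)$ precisely for $\alpha \in [\sigma_{\ell-1}, \sigma_\ell)$, taking $\sigma_{-1} = -\infty$. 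Establishing this characterization carefully is the first step, since everything afterwards is bookkeeping on top of it.

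Next I would locate $\gamma_0 = \frac{\onescore{U_j} - \onescore{U_i}}{k_j - k_i}$ relative to the breakpoints. Since $\gamma_0$ is the slope of the chord joining the hull vertices $U_i$ and $U_j$, it is a weighted average of the edge slopes $\sigma_i, \ldots, \sigma_{j-1}$, so $\sigma_i \leq \gamma_0 \leq \sigma_{j-1}$; strict monotonicity of the $\sigma_\ell$ makes both inequalities strict when $j > i+1$, whereas $\gamma_0 = \sigma_i = \sigma_{j-1}$ when $j = i+1$. The perturbed value is $\gamma = \gamma_0 - \Delta/n^2 < \gamma_0 \leq \sigma_{j-1}$, which immediately places $\gamma$ in an interval $[\sigma_{\ell-1},\sigma_\ell)$ with $\ell \leq j-1 < j$, giving the bound $\ell < j$ in all cases. (Note the perturbation is essential: at exactly $\gamma_0 = \sigma_i$ the tie-break would return $U_{i+1}=U_j$, so I must push strictly below the breakpoint.) The only substantive task left is the matching lower bound: that subtracting $\Delta/n^2$ does not overshoot the relevant breakpoint, i.e. $\gamma \geq \sigma_{i-1}$ when $j = i+1$ (forcing $\ell = i$) and $\gamma \geq \sigma_i$ when $j > i+1$ (forcing $\ell > i$).

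Both lower bounds reduce to one quantitative claim: any two distinct slopes arising here differ by at least $\Delta/n^2$. For $j=i+1$ I need $\sigma_i - \sigma_{i-1} \geq \Delta/n^2$, and for $j>i+1$ I need $\gamma_0 - \sigma_i \geq \Delta/n^2$. Each such difference, after cross-multiplying the two ratios, has the form $N/Q$, where $Q$ is a product of two denominators $k_{\ell+1}-k_\ell$, each a positive integer at most $n$, so $Q \leq n^2$, and $N$ is an integer combination of envelope areas $\onescore{\cdot}$. By strict convexity of the hull $N \neq 0$ whenever the two slopes genuinely differ, so it suffices to show $\abs{N} \geq \Delta$. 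Here lies the main obstacle and the step needing the most care: every area $\onescore{U}$ is a sum over the $m$ coordinates of differences of observed values, each such gap being $0$ or at least $\Delta$, and one must argue that the resulting nonzero integer combination $N$ is bounded away from zero by $\Delta$. This holds once the observations are quantized at resolution $\Delta$ (so that all pairwise differences, and hence all area differences and the combination $N$, are integer multiples of $\Delta$); I would prove the separation under this quantization, giving $\abs{N}\geq\Delta$ and therefore a slope gap of at least $\Delta/n^2$.

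Assembling the pieces finishes both cases. When $j = i+1$ we have $\gamma = \sigma_i - \Delta/n^2$, which lies in $[\sigma_{i-1},\sigma_i)$ because $\gamma < \sigma_i$ and $\gamma \geq \sigma_{i-1}$ by the separation bound, so $\ell = i$. When $j > i+1$ the separation gives $\gamma = \gamma_0 - \Delta/n^2 \geq \sigma_i$, while the localization of $\gamma_0$ gives $\gamma < \sigma_{j-1}$; together these place $\gamma$ in some $[\sigma_{\ell-1},\sigma_\ell)$ with $i < \ell < j$, as claimed. The delicate quantity throughout is the numerator lower bound $\abs{N}\geq\Delta$; once that is secured, the chord-slope localization and the half-open optimality intervals make the rest routine.
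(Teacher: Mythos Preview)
Your approach is essentially the paper's: the half-open interval characterization $\alpha \in [\sigma_{\ell-1},\sigma_\ell)$ is precisely Lemma~\ref{lem:monotone}, and the paper then proceeds, as you do, by locating $\gamma$ relative to these breakpoints via the monotonicity of the chord slopes $f(a,b)=\frac{\onescore{U_b}-\onescore{U_a}}{\abs{U_b}-\abs{U_a}}$. The two case splits you describe ($j=i+1$ versus $j>i+1$) and the corresponding inequalities $f(i,j)-f(i-1,i)\geq\Delta/n^2$ and $f(i,j)-f(i,i+1)\geq\Delta/n^2$ are exactly what the paper uses.

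The gap is your justification of the separation $\abs{N}\geq\Delta$. You propose that ``the observations are quantized at resolution $\Delta$'', so all pairwise differences---and hence $N$---are integer multiples of $\Delta$. But $\Delta$ is only the \emph{minimum} nonzero gap between two values at a common time point; nothing in the hypotheses forces other gaps to be multiples of it. For instance, take values $0,1,1+\epsilon$ at one coordinate and $0,\epsilon$ at another: then $\Delta=\epsilon$, yet the gap $1$ is not an integer multiple of $\epsilon$ for generic $\epsilon$. So the quantization you invoke is an added hypothesis, not a consequence of the setup, and without it the numerator $N=AQ-BP$ (an integer combination of area differences with integer coefficients $P,Q$) can in principle be nonzero yet have absolute value below $\Delta$. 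To be fair, the paper asserts the same separation $f(a,b)+\Delta/n^2\leq f(x,y)$ as ``straightforward'' from Lemma~\ref{lem:monotone} without spelling out why the gap is at least $\Delta/n^2$; you have correctly isolated the delicate step, but the quantization argument as written does not close it for the proposition as stated.
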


For simplicity, let us define $f(x, y) = \frac{\onescore{U_y} - \onescore{U_x}}{\abs{U_y} - \abs{U_x}}$.

In order to prove the result we need the following technical lemma.

\begin{lemma}
\label{lem:monotone}
Assume time series $T$ with $n$ time series.
Let $\set{U_i}$ be all the possible regularized confidence bands ordered using inclusion.
Let $\alpha > 0$. Let $U_i$ be the solution for $\prbreg(\alpha)$.
Then
	$f(i - 1, i) \leq \alpha < f(i, i + 1)$.
\end{lemma}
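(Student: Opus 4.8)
The plan is to derive both inequalities directly from the optimality of $U_i$ for $\prbreg(\alpha)$, comparing it only against its two immediate neighbours in the inclusion chain rather than against all competitors. By Proposition~\ref{prop:chain} the regularized bands form a chain, so $U_{i-1} \subsetneq U_i \subsetneq U_{i+1}$; in particular $\abs{U_i} - \abs{U_{i-1}} > 0$ and $\abs{U_{i+1}} - \abs{U_i} > 0$. Both neighbours are feasible (subsets of $T$ containing $x$), so optimality of $U_i$ yields $\regscore{U_i; \alpha} \leq \regscore{U_{i-1}; \alpha}$ and $\regscore{U_i; \alpha} \leq \regscore{U_{i+1}; \alpha}$.

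First I would expand the left-hand comparison, $\onescore{U_i} - \alpha\abs{U_i} \leq \onescore{U_{i-1}} - \alpha\abs{U_{i-1}}$, and rearrange it into $\onescore{U_i} - \onescore{U_{i-1}} \leq \alpha(\abs{U_i} - \abs{U_{i-1}})$. Dividing by the positive quantity $\abs{U_i} - \abs{U_{i-1}}$ gives exactly $f(i-1,i) \leq \alpha$. The identical manipulation on the $U_{i+1}$ comparison produces $\onescore{U_{i+1}} - \onescore{U_i} \geq \alpha(\abs{U_{i+1}} - \abs{U_i})$, hence $\alpha \leq f(i,i+1)$. Each of these is a one-line rearrangement, so the bulk of the lemma is essentially immediate.

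The only delicate point, and where the argument is not purely mechanical, is upgrading the second inequality to the strict $\alpha < f(i,i+1)$. Here I would invoke the tie-breaking rule of \prbreg. Suppose instead $\alpha = f(i,i+1)$. Then the rearranged inequality holds with equality, which is precisely the statement $\regscore{U_i; \alpha} = \regscore{U_{i+1}; \alpha}$; since $\abs{U_{i+1}} > \abs{U_i}$, the tie-breaker prefers the larger set $U_{i+1}$, contradicting that $U_i$ is the solution reported for $\prbreg(\alpha)$. Thus $\alpha \neq f(i,i+1)$, and combined with $\alpha \leq f(i,i+1)$ we get $\alpha < f(i,i+1)$. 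Note that the left inequality needs no such upgrade: equality $f(i-1,i) = \alpha$ is consistent, since the tie-breaker then correctly favours the larger set $U_i$.

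Finally I would dispose of the boundary cases in which $U_i$ is the smallest or the largest band, so that one of $U_{i-1}$, $U_{i+1}$ does not exist. These are handled by the standard convention: the smallest band $\set{x}$ is the solution for all sufficiently small $\alpha$ and the maximal band for all sufficiently large $\alpha$, so we read the missing slope as $0$ on the left (consistent with $\alpha > 0$) and as $+\infty$ on the right. The main obstacle is therefore not any computation but justifying the strict side through the tie-break; the monotonicity suggested by the lemma's name then follows because, for every intermediate band, the established window $f(i-1,i) \leq \alpha < f(i,i+1)$ forces $f(i-1,i) < f(i,i+1)$.
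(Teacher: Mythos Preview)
Your proof is correct and follows essentially the same approach as the paper: compare $\regscore{U_i;\alpha}$ against $\regscore{U_{i\pm 1};\alpha}$ by optimality and solve for $\alpha$. You are in fact more explicit than the paper about why the right-hand inequality is strict (the tie-breaking rule) and about the boundary cases; the paper simply writes $\regscore{U_i;\alpha} < \regscore{U_{i+1};\alpha}$ without comment.
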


\begin{proof}
Due to the optimality of $U_i$,
\[
	\onescore{U_i}  - \alpha \abs{U_i} = \regscore{U_i; \alpha} < \onescore{U_{i + 1}} - \alpha \abs{U_{i + 1}}\quad.
\]
Solving for $\alpha$ gives us the right-hand side of the claim.
Similarly, 
\[
	\onescore{U_i}  - \alpha \abs{U_i} = \regscore{U_i; \alpha} \leq \onescore{U_{i - 1}} - \alpha \abs{U_{i - 1}}\quad.
\]
Solving for $\alpha$ gives us the left-hand side of the claim.\qed
\end{proof}

\begin{proof}[of Proposition~\ref{prop:split}]
It is straightforward to see that
Lemma~\ref{lem:monotone} implies that $f(a, b) \leq f(x, y)$
for $a \leq x$ and $b \leq y$. Moreover, the equality holds only if $x = a$ and $y = b$,
in other cases $f(a, b) +  \frac{\Delta}{n^2} \leq f(x, y)$.

If $\ell \geq j$, then Lemma~\ref{lem:monotone} states that $f(i, j) \leq f(\ell - 1, \ell) \leq \gamma$,
which contradicts the definition of $\gamma$. Thus $\ell < j$.

Since $f(i, j) - f(i - 1, i) \geq  \frac{\Delta}{n^2}$,
we have $f(i - 1, i) \leq \gamma$.
If $\ell < i$, then Lemma~\ref{lem:monotone} states that $\gamma < f(i - 1, i)$,
which is a contradiction.
Thus, $\ell \geq i$.

If $j = i + 1$, then immediately $\ell = i$.

Assume that $j > i + 1$.
Since $f(i, j) - f(i, i + 1) \geq  \frac{\Delta}{n^2}$,
we have $f(i, i + 1) \leq \gamma$.
If $\ell = i$, then according to Lemma~\ref{lem:monotone} $\gamma < f(i, i + 1)$, which is a contradiction.
Thus, $\ell > i$.
\qed
\end{proof}

Lemma~\ref{lem:monotone} reveals an illuminating property of regularized bands,
namely each band minimizes the ratio of additional envelope area and the number
of new time series. 

\begin{proposition}
Let $U$ be a regularized band.
Define $g(X) = \frac{\onescore{X} - \onescore{U}}{\abs{X} - \abs{U}}$.
Let $V \supsetneq U$ be the adjacent regularized band.
Then $g(V) = \min_{X \supsetneq U} g(X)$.
\label{prop:sparsest}
\end{proposition}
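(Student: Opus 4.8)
The plan is to reduce the claim to the optimality of the adjacent band as a solution of $\prbreg$ at a suitable parameter value. Write $U = U_i$, so that the adjacent band is $V = U_{i+1}$, and set $\beta = f(i, i+1)$; observe that $g(V) = \beta$ directly from the definitions of $f$ and $g$. The key reformulation is that, since $\abs{X} > \abs{U}$ for every $X \supsetneq U$, the inequality $g(X) \ge \beta$ is equivalent to $\onescore{X} - \beta\abs{X} \ge \onescore{U} - \beta\abs{U}$, that is, to $\regscore{X; \beta} \ge \regscore{U; \beta}$. Thus it suffices to show that $U$ minimizes $\regscore{\cdot; \beta}$ among all $X \supsetneq U$.

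First I would pin down which band solves $\prbreg(\beta)$. By Lemma~\ref{lem:monotone}, the solution $U_\ell$ of $\prbreg(\alpha)$ satisfies $f(\ell - 1, \ell) \le \alpha < f(\ell, \ell + 1)$; at $\alpha = \beta = f(i, i + 1)$ this places the solution at $\ell = i + 1$, so $V$ solves $\prbreg(\beta)$. Moreover, substituting $\beta = f(i, i + 1) = \frac{\onescore{U_{i+1}} - \onescore{U_i}}{\abs{U_{i+1}} - \abs{U_i}}$ into $\regscore{U_{i+1}; \beta} - \regscore{U_i; \beta}$ makes this difference vanish, so $\regscore{U; \beta} = \regscore{V; \beta}$; the tie-breaking rule then selects the larger set $V$, consistently with $V$ being the reported solution.

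Next I would invoke optimality. Any $X \supsetneq U$ contains $U$, hence contains the seed $x$, so it is a feasible input to $\prbreg$. Optimality of $V$ at $\alpha = \beta$ then yields $\regscore{X; \beta} \ge \regscore{V; \beta} = \regscore{U; \beta}$, which by the reformulation above is exactly $g(X) \ge \beta = g(V)$. Since the value $\beta$ is attained at $X = V$, we conclude $g(V) = \min_{X \supsetneq U} g(X)$.

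The only delicate point is the behaviour at the boundary value $\alpha = \beta$: both $U$ and $V$ achieve the minimal regularized score there, and I must argue carefully that this is precisely why $V$ is reported (via the larger-set tie-break) and why the equality $\regscore{U; \beta} = \regscore{V; \beta}$ holds, so that the bound transfers from $V$ to $U$. Everything else is a one-line rearrangement that is valid because $\abs{X} - \abs{U} > 0$.
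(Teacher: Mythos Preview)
Your argument is correct and follows a cleaner route than the paper. You fix $\beta = g(V) = f(i,i+1)$, argue that $V$ solves $\prbreg(\beta)$, note that $\regscore{U;\beta} = \regscore{V;\beta}$, and then read off $\regscore{X;\beta} \ge \regscore{U;\beta}$ from optimality, which rearranges to $g(X) \ge g(V)$ because $\abs{X} > \abs{U}$. The paper instead sets $\beta = g(O)$ for the minimiser $O$, takes $W = \prbreg(\beta)$, and uses Proposition~\ref{prop:chain} together with submodularity to show $U \subsetneq W$ and $O \subseteq W$, before invoking Lemma~\ref{lem:monotone} to obtain $g(V) \le g(W) \le \beta$. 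Your version bypasses both the chain proposition and the explicit submodularity step; the paper's version, on the other hand, locates the minimiser inside a concrete regularized band.

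One point you should make explicit: Lemma~\ref{lem:monotone} is stated as a \emph{necessary} condition on the index of the solution, and you are using it as a sufficient one when you conclude $\ell = i+1$ at $\alpha = f(i,i+1)$. This inversion is valid, but it relies on the thresholds $f(j-1,j)$ being strictly increasing in $j$, so that the half-open intervals $[f(j-1,j), f(j,j+1))$ are nonempty and partition the parameter range. Strict monotonicity follows because every $U_j$ in the chain is, by definition, the solution of $\prbreg(\alpha)$ for some $\alpha$, and hence its interval is nonempty; you allude to this with the tie-breaking remark, but it is worth stating the monotonicity directly.
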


\begin{proof}
Let $O = \argmin_{X \supsetneq U} g(X)$, and set $\beta = g(O)$.
We will prove that $g(V) \leq \beta$.
Let $W = \prbreg(\beta)$.
Let $\alpha$ be the parameter for which $U = \prbreg(\alpha)$. 
Assume that $\alpha \geq \beta$. We can rewrite the equality $\beta = g(O)$ as
\[
    0 = \regscore{O; \beta} - \regscore{U; \beta}
    \geq \regscore{O; \alpha} - \regscore{U; \alpha},
\]
which violates the optimality of $U$. Thus $\alpha < \beta$.
Proposition~\ref{prop:chain} states that $U \subseteq W$. Moreover, due to submodularity,
\[
\regscore{O \cup W; \beta} -  \regscore{W; \beta} \leq 
\regscore{O \cup U; \beta} -  \regscore{U; \beta} = 0,
\]
which due to the optimality of $W$ implies that $O \subseteq W$. Thus $W \neq U$ and $V \subseteq W$.
Lemma~\ref{lem:monotone}, possibly applied multiple times, shows that $g(V) \leq g(W) \leq \beta$.  \qed
\end{proof}

Proposition~\ref{prop:chain} states that there are at most $n + 1$ bands. Queries done by \algreg yield the same band
at most twice. Thus, \algreg performs at most $\bigO{n}$ queries, yielding computational complexity of $\bigO{n^3m^2}$.
In practice, \algreg is faster: the number of bands is significantly smaller than $n$ and the minimum cut solver scales significantly better
than $\bigO{n^2m^2}$. Moreover, we can further improve the performance with the following observation:
Proposition~\ref{prop:chain} states that when processing $\algreg(U, V)$,
the bands will be between $U$ and $V$. Hence, we can ignore the time series that are outside $V$, and we can safely replace
$U$ with its envelope $\lbound{U}$ and $\ubound{U}$.\!\footnote{We need to make
sure that the envelope is always selected. This can be done by connecting
$\theta$ to the envelope with edges of infinite weight.}

\section{Discovering confidence bands minimizing $\onescore{}$}\label{sec:sum}

In this section, we will study \prbsum. \citet{korpela2014confidence} showed that the problem is \np-hard. 
We will argue that we can approximate the problem and establish a (likely) lower
bound for the approximation guarantee.

\begin{algorithm}[ht!]
\caption{$\algsum(T, k, x)$, approximates \prbsum}
\label{alg:sum}
$\set{B_i} \define \algreg(\set{x}, T)$\;

$j \define $ largest index for which $\abs{B_j} \leq k$\;
\textbf{if} $\abs{B_j} \leq k - \sqrt{n}$ \textbf{then} $W \define B_{j + 1} \setminus B_j$ \textbf{else} $W \define T \setminus B_j$\;

$U \define B_j$\;
greedily add $k - \abs{U}$ entries from $W$ to $U$, minimizing $\onescore{}$ at each step\;
\Return $U$\;
\end{algorithm}

As a starting point, note that \prbsum is an instance of $k$-\prbminunion, weighted minimum $k$-union
problem. In $k$-\prbminunion we are given $n$ sets over a universe with weighted points, and
ask to select $k$ sets minimizing the weighted union. In our case, the universe is the set $P$ described
in Section~\ref{sec:reg}, the weights are the distances between adjacent points, and a set consists of all the points  
between a time series and $x$.

The \emph{unweighted} $k$-\prbminunion problem has several approximation algorithms: a simple
algorithm achieving $\bigO{\sqrt{n}}$ guarantee by~\citet{chlamtac2018densest} and an algorithm achieving lower approximation
guarantee of $\bigO{n^{1/4}}$ by~\citet{chlamtavc2017minimizing}. We will use the former algorithm due to its simplicity
and the fact that it can be easily adopted to handle weights.

The pseudo-code for the algorithm is given in Algorithm~\ref{alg:sum}.
The algorithm first looks for the largest possible regularized band, say $B_j$, whose size at most $k$.
The remaining time series are then selected greedily from a set of candidates $W$.
The set $W$ depends on how many additional time series is needed: if we need
at most $\sqrt{n}$ additional time series, we set $W$ to be the remaining time series $T \setminus B_j$,
otherwise we select the time series from the next regularized band, that is, we set $W = B_{j + 1} \setminus B_j$.

\begin{proposition}
\algsum yields $\sqrt{n} + 1$ approximation guarantee.
\end{proposition}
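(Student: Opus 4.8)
The plan is to fix an optimal solution $O$ to \prbsum (so $\abs{O} = k$, $x \in O$, and $\onescore{O}$ is the optimum) and to bound $\onescore{U}$ of the returned set against $\onescore{O}$ separately in the two branches of the algorithm. Two elementary facts drive everything. First, $\onescore{}$ is monotone: enlarging a set can only widen its envelope, so $\onescore{A} \leq \onescore{B}$ whenever $A \subseteq B$, and in particular $\onescore{\set{x}} = 0$. Second, $B_j$ is optimal for its own size: by Proposition~\ref{prop:optimal} it solves \prbsum with $k = \abs{B_j}$, and since $O$ contains $x$ and has $k \geq \abs{B_j}$ elements, any size-$\abs{B_j}$ subset of $O$ containing $x$ is a feasible competitor, whence $\onescore{B_j} \leq \onescore{O}$ by monotonicity. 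I will use $\onescore{B_j} \leq \onescore{O}$ in both branches.

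Consider first the branch $\abs{B_j} > k - \sqrt{n}$, where $W = T \setminus B_j$ and the greedy inserts fewer than $\sqrt{n}$ elements. Here I would bound the marginal cost of a single greedy insertion. By submodularity, adding any $t$ to the current set $U' \supseteq \set{x}$ costs at most $\onescore{\set{x, t}} - \onescore{\set{x}} = \onescore{\set{x, t}}$. Before every insertion $\abs{U'} < k = \abs{O}$, so some $t^* \in O \setminus U'$ remains; since $U' \supseteq B_j$ this $t^*$ lies in $W = T \setminus B_j$, so it is an admissible candidate, and the greedy choice therefore costs at most $\onescore{\set{x, t^*}} \leq \onescore{O}$ (monotonicity, $\set{x, t^*} \subseteq O$). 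Summing the fewer than $\sqrt{n}$ insertions and adding $\onescore{B_j} \leq \onescore{O}$ yields $\onescore{U} < (\sqrt{n} + 1)\onescore{O}$.

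For the branch $\abs{B_j} \leq k - \sqrt{n}$ we have $W = B_{j+1} \setminus B_j$, so the returned set satisfies $B_j \subseteq U \subseteq B_{j+1}$ and monotonicity gives $\onescore{U} \leq \onescore{B_{j+1}}$; it thus suffices to bound $\onescore{B_{j+1}}$. Here I would invoke the sparsest-ratio characterization (Proposition~\ref{prop:sparsest}): among all $X \supsetneq B_j$, the adjacent band $B_{j+1}$ minimizes $\frac{\onescore{X} - \onescore{B_j}}{\abs{X} - \abs{B_j}}$. Testing this with $X = B_j \cup O$, submodularity bounds the numerator by $\onescore{B_j \cup O} - \onescore{B_j} \leq \onescore{O}$, while the denominator is $\abs{O \setminus B_j} = k - \abs{O \cap B_j} \geq k - \abs{B_j} \geq \sqrt{n}$; hence the minimal ratio is at most $\onescore{O}/\sqrt{n}$. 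Since $\abs{B_{j+1}} - \abs{B_j} < n$, this gives $\onescore{B_{j+1}} - \onescore{B_j} < \sqrt{n}\,\onescore{O}$, and therefore $\onescore{U} \leq \onescore{B_{j+1}} < (\sqrt{n} + 1)\onescore{O}$, matching the other branch.

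The main obstacle is the second branch: the bound hinges on feeding the right competitor $X = B_j \cup O$ into Proposition~\ref{prop:sparsest} and checking that submodularity caps the extra area by $\onescore{O}$, while the branch condition $\abs{B_j} \leq k - \sqrt{n}$ forces the denominator to be at least $\sqrt{n}$; the remaining size bound $\abs{B_{j+1}} - \abs{B_j} < n$ then keeps the accumulated ratio below $\sqrt{n}\,\onescore{O}$. The first branch is comparatively routine once the per-step submodular bound and the observation that an uncovered element of $O$ always lies in $W$ are in place.
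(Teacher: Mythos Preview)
Your proof is correct and follows essentially the same approach as the paper: the same two-case split, the same use of Proposition~\ref{prop:optimal} to get $\onescore{B_j}\le\onescore{O}$, the same application of Proposition~\ref{prop:sparsest} with competitor $B_j\cup O$ in the $\abs{B_j}\le k-\sqrt{n}$ branch, and the same per-step submodular bound for the greedy insertions in the other branch. The only cosmetic difference is that you take the greedy witness $t^*$ directly from $O\setminus U'$, whereas the paper phrases it via the time series closest to $x$ outside the current set; both arguments yield the same per-step bound $\onescore{O}$.
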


\begin{proof}
Let $O$ be the optimal solution for $\prbsum(k)$, and let $r = \onescore{O}$.
Let $U$ be the output of \algsum. Assume that $B_j \neq O$, as otherwise we are done.
We split the proof in two cases.

First, assume that $\abs{B_{j}} \leq k - \sqrt{n}$.
Since $\onescore{}$ is submodular we have $\onescore{O \cup B_{j}} - \onescore{B_{j}} \leq \onescore{O} - \onescore{\set{x}} = r$,
leading to
\[
	\frac{\onescore{B_{j + 1}} - \onescore{B_{j}}}{\abs{B_{j + 1}} - \abs{B_{j}}} \leq
	\frac{\onescore{O \cup B_{j}} - \onescore{B_{j}}}{\abs{O \cup B_j} - \abs{B_{j}}}
	\leq \frac{r}{k - \abs{B_{j}}} \leq \frac{r}{\sqrt{n}},
\]
where the first inequality is due to Proposition~\ref{prop:sparsest}.
Rearranging the terms gives us
\[
	\onescore{B_{j + 1}} - \onescore{B_{j}} 
	\leq \frac{r(\abs{B_{j + 1}} - \abs{B_{j}})}{\sqrt{n}} \leq r \frac{n}{\sqrt{n}} = r \sqrt{n}\quad.
\]
Finally,
\[
\begin{split}
	\onescore{U} & =  \onescore{B_{j}} + (\onescore{U} -  \onescore{B_{j}}) \\
	& \leq \onescore{B_{j}} + (\onescore{B_{j + 1}} -  \onescore{B_{j}}) 
	 \leq \onescore{B_{j}} + r \sqrt{n} 
	 \leq r (1 + \sqrt{n}), \\
\end{split} 
\]
where the last inequality is implied by Proposition~\ref{prop:optimal} and the fact that $\abs{B_{j}} \leq k$.

Assume that $\abs{B_{j}} > k - \sqrt{n}$, and let $q = k - \abs{B_j}$. Note that $q < \sqrt{n}$.
Let $c_1, \ldots, c_q$ be the additional time series added to $U$. Write $U_i = B_j \cup \set{c_1, \ldots, c_i}$.

Let $c_i'$ be the closest time series to $x$ outside $U_{i-1}$.
Note that $\onescore{\set{x, c_i'}} - \onescore{\set{x}} = \onescore{\set{x, c_i'}} \leq r$ for $i = 1, \ldots, q$ as otherwise $r$ has to be larger.
In addition, Proposition~\ref{prop:optimal} and $\abs{B_{j}} \leq k$ imply that $\onescore{B_j} \leq r$.
Consequently, 
\[
\begin{split}
	\onescore{U_q} & = \onescore{B_j} + \sum_{i = 1}^q \onescore{U_i} - \onescore{U_{i - 1}} \\
	& \leq \onescore{B_j} + \sum_{i = 1}^{q} \onescore{\set{x, c_i'}} - \onescore{\set{x}} \leq (1 + \sqrt{n}) r,
\end{split}
\]
where the first inequality is due to the submodularity of $\onescore{}$.
\qed
\end{proof}

\algsum resembles greatly the algorithm given by~\citet{chlamtac2018densest}  but has few technical
differences: we select $B_j$ as our starting point whereas the algorithm by~\citet{chlamtac2018densest} 
constructs the starting set by iteratively finding and adding sets with the
smallest average area, $\onescore{X} / \abs{X}$, that is, solving the problem given in Proposition~\ref{prop:sparsest}.\!\footnote{The original
algorithm is described using set/graph terminology but we use our
terminology to describe the differences.} Such sets can be found with a
linear program.  Proposition~\ref{prop:sparsest} implies that both approaches result in the
same set $B_j$ but our approach is faster.\!\footnote{The computational
complexity of the state-of-the-art linear program solver is $\bigO{(nm)^{2.37}
\log(nm/ \delta)}$, where $\delta$ is the relative accuracy~\citep{cohen2021solving}. We may need to solve $\bigO{n}$ such problems, leading to a total time of  $\bigO{n(nm)^{2.37}
\log(nm/ \delta)}$.}
Moreover, this modification allows us to prove a tighter approximation
guarantee: the authors prove that their algorithm yields $2\sqrt{n}$ guarantee
whereas we show that we can achieve $\sqrt{n} + 1$ guarantee.  Additionally, we
select additional time series iteratively by selecting those time series that result in
the smallest increase of the current area,
whereas the original algorithm would
simply select time series that are closest to $\set{x}$.

\citet{chlamtavc2017minimizing} argued that under some mild but technical conjecture there is no polynomial-time
algorithm that can approximate $k$-\prbminunion better than $\bigO{n^{1/4}}$.
Next we will show that we can reduce $k$-\prbminunion to \prbsum while preserving approximation.  

\begin{proposition}
If there is
an $f(n)$-approximation polynomial-time algorithm for \prbsum, then
there is 
an $f(n + 1)$-approximation polynomial-time algorithm for $k$-\prbminunion.
\end{proposition}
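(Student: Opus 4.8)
The plan is to give a direct, cost-preserving reduction from $k$-\prbminunion to \prbsum, where the single extra time series we are forced to include as the seed $x$ is exactly what accounts for the shift from $n$ to $n+1$ in the guarantee. Suppose we are given a $k$-\prbminunion instance with universe $E = \enset{e_1}{e_m}$, nonnegative weights $w_1, \ldots, w_m$ (with $w_j = w(e_j)$), and $n$ sets $S_1, \ldots, S_n \subseteq E$. First I would build a \prbsum instance on a domain $D = [m]$ of size $m$, one coordinate per universe element. I set the seed $x$ to be the all-zero time series, $x(j) = 0$ for every $j$, and for each set $S_\ell$ I introduce a time series $t_\ell$ defined by $t_\ell(j) = w_j$ if $e_j \in S_\ell$ and $t_\ell(j) = 0$ otherwise. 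Letting $T' = \set{x, t_1, \ldots, t_n}$, the new instance has $n + 1$ time series, and I would invoke the assumed algorithm with size constraint $k' = k + 1$ (which is legal since $k \le n$ gives $k' \le n+1$).

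The key step is the cost correspondence. Consider any $U \subseteq T'$ with $x \in U$, and let $I = \set{\ell \mid t_\ell \in U}$ be the indices of the chosen sets. At each coordinate $j$, every value $t_\ell(j)$ is nonnegative and $x(j) = 0$, so $\lbound{U, j} = 0$; meanwhile $\ubound{U, j} = w_j$ exactly when $e_j \in S_\ell$ for some $\ell \in I$, and $\ubound{U, j} = 0$ otherwise. Summing over $j$ therefore gives $\onescore{U} = \sum_{e_j \in \bigcup_{\ell \in I} S_\ell} w_j$, which is precisely the weighted union of the chosen sets. Hence $U \mapsto \set{S_\ell \mid \ell \in I}$ is a cost-preserving bijection between the $(k+1)$-subsets of $T'$ containing $x$ and the $k$-subsets of $\set{S_1, \ldots, S_n}$ (the unweighted problem is the special case $w_j = 1$, so the same construction covers it).

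Finally I would transfer the approximation. Because the bijection preserves cost exactly, the optimum of the \prbsum instance equals the optimum weighted union, so if the assumed algorithm returns $U$ with $\onescore{U} \le f(n+1)\,\onescore{U^*}$, then the corresponding $k$ sets have weighted union at most $f(n+1)$ times the $k$-\prbminunion optimum, and decoding $I$ from $U$ is immediate. Since the construction is plainly polynomial (it is $\bigO{nm}$), this yields the claimed $f(n+1)$-approximation. The reduction itself is almost mechanical; the only point that genuinely needs care is verifying that the mandatory seed $x$ never contributes to the objective — it does not, precisely because taking it to be the pointwise minimum forces $\lbound{U,j} = 0$ at every coordinate — and observing that the need to include this one extra series is exactly what turns $f(n)$ into $f(n+1)$.
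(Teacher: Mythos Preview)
Your proof is correct and follows essentially the same construction as the paper: encode each set as an indicator-style time series, add the all-zero seed $x$, and observe that $\lbound{U,j}=0$ forces $\onescore{U}$ to equal the (weighted) union of the chosen sets, so the extra seed is exactly what turns $f(n)$ into $f(n+1)$. The only difference is that you handle the weighted version explicitly by putting $w_j$ rather than $1$ into the time series, whereas the paper's written proof treats the unweighted case; this is a harmless generalization of the same idea.
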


\begin{proof}
Assume that we are given an instance of $k$-\prbminunion with $n$ sets $\fm{S} = \enpr{S_1}{S_n}$.
Let $D = \bigcup_i S_i$ be the union of all $S_i$.

Define $T$ containing $n + 1$ time series over the domain $D$. The first $n$
time series correspond to the sets $S_i$, that is, given $i \in D$,
we set $t_j(i) = 1$ if $i \in S_j$, and $0$ otherwise.
The remaining single time series, named $x$, is set to be $0$.

Assume that we have an algorithm estimating $\prbsum(T, x, k + 1)$, and let $U$ be the output 
of this algorithm. Note that since $x \in U$, we have $\lbound{U, i} = 0$.

Let $\fm{V}$ be the subset of $\fm{S}$ corresponding to the non-zero time series
in $U$. Let $C = \bigcup_{S \in \fm{V}} S$ be the union of sets in $\fm{V}$.
Since $\lbound{U, i} = 0$, and $\ubound{U, i} = 1$ if and only if $i \in C$,
we have
$\onescore{U} = \abs{C}$.
\qed
\end{proof}

The above result implies that unless the conjecture suggested by~\citet{chlamtavc2017minimizing}
is false, we cannot approximate \prbsum better than $\bigO{n^{1/4}}$.
This proposition holds even if we replace $\onescore{\cdot}$
with an $\ell_p^p$ norm, $\sum_{i} \abs{t(i) - u(i)}^p$, where $1 \leq p < \infty$,
or any norm that reduces to hamming distance if $t$ is a binary sequence and $u$ is 0. 
Interestingly, we will show in the next section that we can achieve a tighter
approximation if we use $\infscore{}$.

\section{Discovering confidence bands minimizing $\infscore{}$}\label{sec:max}

In this section we consider the problem \prbinf. Namely, we will show
that a straightforward algorithm 2-approximates the problem, and more interestingly
we show that the guarantee is tight.

The algorithm for $\prbinf(T, x, k)$ is simple: we select $k$ time series that
are closest to $x$ according to the norm $\norm{t(i) - x(i)}_\infty = \max_i \abs{t(i) - x(i)}$.
We will refer to this algorithm as \alginf.

It turns out that this simple algorithm yields 2-approximation guarantee.

\begin{proposition}
\alginf yields 2-approximation for \prbinf.
\end{proposition}

\begin{proof}
Let $U$ be the optimal solution for \prbinf.
Let $V$ be the result produced by \alginf.
Define $c = \max_{t \in V} \norm{t - x}_\infty$.
Then
\[
	c = \max_{t \in V} \norm{t - x}_\infty 
	\leq \max_{t \in U} \norm{u - x}_\infty 
	\leq \infscore{U},
\]
where the first inequality holds since $V$ contains the closest time series
and the second inequality holds since $x \in U$.

Let $i$ be the index such that $\infscore{V} = \ubound{V, i} - \lbound{V, i}$.
Then
\[
	\infscore{V} = \ubound{V, i} - \lbound{V, i} = (\ubound{V, i} - t(i)) +  (t(i) - \lbound{V, i}) \leq 2c\quad.
\]
Thus, $\infscore{V} \leq 2c \leq 2\infscore{U}$, proving the claim.
\qed
\end{proof}

While \alginf is trivial, surprisingly it achieves the best possible approximation
guarantee for a polynomial-time algorithm.

\begin{proposition}
There is no polynomial-time algorithm for \prbinf that yields $\alpha < 2$ approximation guarantee unless $\poly = \np$.
\end{proposition}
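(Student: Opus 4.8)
The plan is to establish a gap-preserving reduction from the NP-hard \prbclique problem of deciding whether a graph $G$ contains a clique of size $k$, exploiting the fact that the width at each coordinate is a \emph{range} (maximum minus minimum) over the selected series. I would engineer the instance so that the optimum equals exactly $1$ when a $k$-clique exists and exactly $2$ when it does not. This suffices: a polynomial-time $\alpha$-approximation with $\alpha < 2$, run on a YES instance of optimum $1$, must return a value at most $\alpha < 2$, whereas on a NO instance, where every feasible solution has width $2$, it must return a value of at least $2$. Thresholding the returned value at $2$ would then decide \prbclique in polynomial time, forcing $\poly = \np$.

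Concretely, given $G = (V, E)$ and target size $k$, I would build the following instance of \prbinf. The domain has one coordinate $c_{uv}$ for every non-edge $\{u, v\} \notin E$, together with one extra coordinate $c_0$. There is one time series $t_v$ per vertex $v$ and a seed $x$ that is identically $0$. On a non-edge coordinate I set $t_u(c_{uv}) = 1$, $t_v(c_{uv}) = -1$, and $t_w(c_{uv}) = 0$ for every other vertex $w$; on $c_0$ I set $t_v(c_0) = 1$ for all $v$. Finally I ask to select $k + 1$ series, so that after forcing $x$ into the solution exactly $k$ vertex-series are chosen. This is a polynomial construction, with $n + 1$ series over $\bigO{\abs{V}^2}$ coordinates.

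The key observation is the per-coordinate behaviour. At a non-edge coordinate $c_{uv}$ the selected values lie in $\set{-1, 0, 1}$ and $0$ is always present (because $x$ is selected), so the width there, $\max_{t \in U} t(c_{uv}) - \min_{t \in U} t(c_{uv})$, equals $2$ exactly when both $u$ and $v$ are selected and is at most $1$ otherwise; the coordinate $c_0$ always contributes width $1$. Hence, if $G$ has a $k$-clique, selecting its vertices together with $x$ avoids any selected non-edge and yields $\infscore{U} = 1$, which is optimal. If $G$ has no $k$-clique, then every choice of $k$ vertices contains a non-edge, forcing width $2$ at the corresponding coordinate, so every feasible solution has $\infscore{U} = 2$. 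This is precisely the factor-$2$ gap, and together with the thresholding argument above it shows that any polynomial-time $\alpha$-approximation with $\alpha < 2$ would decide \prbclique, hence $\poly = \np$.

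I expect the main obstacle to be the bookkeeping that pins the YES optimum to exactly $1$ rather than merely ``small'': because each coordinate's width is governed by the global minimum and maximum of the whole selected set, I must verify that no interaction across coordinates can push the YES value above $1$, and that no feasible NO-instance selection can slip below width $2$. The $\set{-1, 0, 1}$ encoding with the always-present seed value $0$, and the auxiliary coordinate $c_0$ that fixes the YES optimum at exactly $1$ (keeping the multiplicative ratio well defined rather than degenerating through an all-zero optimum), are designed precisely to enforce these two invariants; confirming them is the crux of the argument.
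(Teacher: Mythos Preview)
Your proposal is correct and follows essentially the same route as the paper: a gap reduction from $k$-\prbclique using one coordinate per non-edge with the $\{-1,0,1\}$ encoding and the all-zero seed, yielding optimum $1$ on YES instances and $2$ on NO instances. The only cosmetic difference is how the YES optimum is pinned to $1$ rather than $0$: the paper assumes without loss of generality that $G$ has no universal vertex (so every $t_v$ is nonzero at some non-edge coordinate), whereas you add the auxiliary coordinate $c_0$; both devices achieve the same effect.
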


\begin{proof}
To prove the claim we will show that we can solve $k$-\prbclique in polynomial
time if we can $\alpha$-approximate \prbinf with $\alpha < 2$.
Since $k$-\prbclique is an \np-complete problem, this is a contradiction unless $\poly = \np$.

The goal of $k$-\prbclique is given a graph $G = (V, E)$ with $n$ nodes and $m$ edges
to detect whether there is a $k$-clique, a fully connected subgraph with $k$ nodes, in $G$. We can safely assume
that $G$ has no nodes that are fully-connected.

Fix an order for nodes $V = \enpr{v_1}{v_n}$ and let $F$ be all the edges that are not
in $E$, that is,
	$F = \set{(v_x, v_y) \mid (v_x, v_y) \notin E, x < y}$.

Next, we will define an instance of \prbinf. The set of time series $T =
\enpr{t_1}{t_n} \cup \set{x}$ consists of $n$ time series $t_i$ corresponding to
the node $v_i$, and a single time series $x$ which we will use a seed.
We set the domain to be $F$.
Each time series $t_i$ maps an element of $e = (v_x, v_y) \in F$ to an integer,
\[
	t_i(e) = 1, \ \text{if}\  i = x,\quad
	t_i(e) = -1, \ \text{if}\  i = y,\quad
	t_i(e) = 0, \ \text{otherwise}\quad. 
\]
We also set $x = 0$.
First note that since $t_i(e)$ is an integer between $-1$ and $1$,
the score $\infscore{U}$ is either $0$, $1$, or $2$ for any $U \subseteq T$.

Since we do not have any fully-connected nodes in $G$, there is no non-zero $t_i$
in $T$. Since $x \in U$ for any solution of \prbinf, 
then $\infscore{U} = 0$ implies $\abs{U} = 1$.

Let $W \subseteq V$ be a subset of nodes, and let $U$ be the corresponding time series.
We claim that $\infscore{U} = 1$ if and only if $W$ is a clique. To prove the claim,
first observe that if $v_i, v_j \in W$ such that $e = (v_i, v_j) \in F$, then $t_i(e) = 1$
and $t_j(e) = -1$, thus $\infscore{U} = 2$.  On the other hand, if $W$ is a clique, then
for every $t_i, t_j \in U$ and $e \in F$ such that $t_i(e) \neq 0$, we have $t_j(e) = 0$
since otherwise $(v_i, v_j) \notin E$.
Thus, $\infscore{U} = 1$ if and only if $W$ is a clique.

Let $O$ be the solution for $\prbinf(T, k + 1, x)$.
Note that $\infscore{O} = 1$ if and only if $G$ has a $k$-clique,
and $\infscore{O} = 2$ otherwise.

Let $S$ be the output of $\alpha$-approximation algorithm.
Since $k > 1$, we know that $\infscore{O}$ is either 1 or 2.
If $\infscore{O} = 2$, then $\infscore{S} = 2$.
If $\infscore{O} = 1$, then $\infscore{S} \leq \alpha \infscore{O} < 2 \times 1$.
Thus, $\infscore{S} = 1$. In summary, $\infscore{O} = \infscore{S}$.

We have shown that $\infscore{S} = 1$ if and only if $G$ has a $k$-clique.
This allows us to detect $k$-clique in $G$ in polynomial time proving our claim.
\qed
\end{proof}

\section{Related work}\label{sec:related}

Confidence bands are envelopes for which confidence intervals of individual
points hold simultaneously. \citet{davison1997bootstrap,mandel2008simultaneous} proposed a non-parametric approach for finding simultaneous confidence intervals.
Here, time series
are ordered based on its \emph{maximum} value, and $\alpha$-confidence intervals are obtained
by removing $\alpha / 2$ portions from each tail. Note that unlike \prbsum and \prbinf
this definition is not symmetric: if we flip the sign of time series we may get a different interval.

There is a strong parallel between finding regularized bands and finding
dense subgraphs. Proposition~\ref{prop:sparsest} states that the inner-most
regularized band has the smallest average envelope area, or alternatively it
has the highest ratio of time series per envelope area. A related
graph-theoretical concept is a dense subgraph, a subgraph $H$ of a given subgraph
$G$ with the largest ratio $\abs{E(H)}/\abs{V(H)}$. The method proposed by~\citet{Goldberg:1984up} for finding dense
subgraphs in polynomial time is based on maximizing $\abs{E(H)} - \alpha \abs{V(H)}$
and selecting $\alpha$ to be as small as possible without having an empty solution.
Moreover, \citet{tatti2019density} extended the notion of dense subgraphs to density-friendly core decomposition,
which essentially consists of the subgraphs minimizing $\abs{E(H)} - \alpha \abs{V(H)}$
for various values of $\alpha$, the algorithm for finding the decomposition is
similar to the algorithm for enumerating all regularized bands. In addition,~\citet{tsourakakis2015k}
extended the notion of dense subgraphs to triangle-density and hypergraphs, and also used minimum cut to find the solutions. As pointed out
in Section~\ref{sec:sum} is that we can view time series as sets of points in $P$.
In fact, the minimum cut used in Section~\ref{sec:reg} share some similarities with
the minimum cut proposed by~\citet{tsourakakis2015k}. Finally, the algorithm proposed
by~\citet{korpela2014confidence} to find confidence bands resembles the algorithm by~\citet{Charikar:2000tg} for
approximating the densest subgraph: in the former we delete the time series that
reduce the envelope area the most while in the latter we delete vertices that have the smallest degree.

We assume that we are given a seed time series $x$. If such series is not given
then we need to test every $t \in T$ as a seed. If we consider a special case of $k = 2$,
then the problem of finding regularized band reduces to the closest pair problem: find two time
series with the smallest distance: a well-studied problem in computational geometry. A classic
approach by~\citet{rabin1976probabilistic,dietzfelbinger1997reliable,khuller1995simple} allows to solve the closest pair problem in $\bigO{n}$ time but the analysis treats
the size of the domain, $m$, as a constant; otherwise, the computational complexity has an exponential factor
in $m$ and can be only used for very small values of $m$. For large values of $m$, \citet{indyk2004closest} proposed an
algorithm for solving the closest pair problem minimizing
$\onescore{\cdot}$ in $\bigO{n^{2.687}}$ time
and minimizing
$\infscore{\cdot}$ in $\bigO{n^{2.687}\log \Delta}$ time, where $\Delta$ is the width of the envelope of the whole data.

\section{Experimental evaluation}\label{sec:exp}

In this section we describe our experimental evaluation.

We implemented \algreg and \algsum using C++ and used a laptop with Intel Core i5 (2.3GHz)
to conduct our experiments.\!\footnote{The code is available at \url{https://version.helsinki.fi/DACS}}
As a baseline we used the algorithm by~\citet{korpela2014confidence}, which we will call \algpeel.
We implemented \algpeel also with C++, and modified it to make sure that the seed time series $x$ is always included.
Finally, we implemented \alginf with Python.
In all algorithms we used the median as the seed time series.

\begin{table}[th!]

\caption{Basic characteristics of the datasets and performance measures of the
algorithms. Here, $n$ stands for the number of time series, $m$ stands for the
domain size, $\abs{B_1}$ is the size of the smallest non-trivial regularized
band, $\abs{\mathcal{B}}$ is the number of regularized bands, and time is the required
time to execute \algreg in seconds.
The scores $\onescore{}$
for the algorithms \algsum, \alginf, and \algpeel are normalized with the envelope area of the whole
data and multiplied by 100. 
}
\label{tab:stats}

\pgfplotstabletypeset[
    begin table={\begin{tabular*}{\textwidth}},
    end table={\end{tabular*}},
    col sep=comma,
	columns = {name, n, m, innersize, regcnt, time, findreg90r, peel90r, max90r, findreg95r, peel95r, max95r},
    columns/name/.style={string type, column type={@{\extracolsep{\fill}}l}, column name=\emph{Dataset}},
    columns/n/.style={fixed, set thousands separator={\,}, column type=r, column name=$n$},
    columns/m/.style={fixed, set thousands separator={\,}, column type=r, column name=$m$},
    columns/innersize/.style={fixed, set thousands separator={\,}, column type=r, column name=$\abs{B_1}$},
    columns/findreg90r/.style={fixed, set thousands separator={\,}, column type=r, column name=$\algsums$},
    columns/findreg95r/.style={fixed, set thousands separator={\,}, column type=r, column name=$\algsums$},
    columns/peel90r/.style={fixed, set thousands separator={\,}, column type=r, column name=$\algpeel$},
    columns/peel95r/.style={fixed, set thousands separator={\,}, column type=r, column name=$\algpeel$},
    columns/max90r/.style={fixed, set thousands separator={\,}, column type=r, column name=$\alginfs$},
    columns/max95r/.style={fixed, set thousands separator={\,}, column type=r, column name=$\alginfs$},
    columns/regcnt/.style={fixed, set thousands separator={\,}, column type=r, column name=$\abs{\mathcal{B}}$},
    columns/time/.style={fixed, set thousands separator={\,}, column type=r, column name={Time}},
    every head row/.style={
		before row={
			\toprule
			\arraybackslash
			& & & & & & \multicolumn{3}{l}{$\onescore{}$ for $k = \floor{0.9n}$} & \multicolumn{3}{l}{$\onescore{}$ for $k = \floor{0.95n}$} \\
			\cmidrule(r){7-9}
			\cmidrule(r){10-12}},
			after row=\midrule},
     every last row/.style={after row=\bottomrule},
]
{results.csv}

\end{table}

\begin{table}[th!]

\caption{Scores $\infscore{}$ of discovered confidence bands.
The scores are normalized with the envelope width of the whole
data and multiplied by 100.
}
\label{tab:stats2}

\pgfplotstabletypeset[
    begin table={\begin{tabular*}{\textwidth}},
    end table={\end{tabular*}},
    col sep=comma,
	columns = {name, findreg90r, peel90r, max90r, findreg95r, peel95r, max95r},
    columns/name/.style={string type, column type={@{\extracolsep{\fill}}l}, column name=\emph{Dataset}},
    columns/findreg90r/.style={fixed, set thousands separator={\,}, column type=r, column name=$\algsums$},
    columns/findreg95r/.style={fixed, set thousands separator={\,}, column type=r, column name=$\algsums$},
    columns/peel90r/.style={fixed, set thousands separator={\,}, column type=r, column name=$\algpeel$},
    columns/peel95r/.style={fixed, set thousands separator={\,}, column type=r, column name=$\algpeel$},
    columns/max90r/.style={fixed, set thousands separator={\,}, column type=r, column name=$\alginfs$},
    columns/max95r/.style={fixed, set thousands separator={\,}, column type=r, column name=$\alginfs$},
    every head row/.style={
		before row={
			\toprule
			\arraybackslash
			& \multicolumn{3}{l}{$\infscore{}$ for $k = \floor{0.9n}$} & \multicolumn{3}{l}{$\infscore{}$ for $k = \floor{0.95n}$} \\
			\cmidrule(r){2-4}
			\cmidrule(r){5-7}},
			after row=\midrule},
     every last row/.style={after row=\bottomrule},
]
{results2.csv}

\end{table}

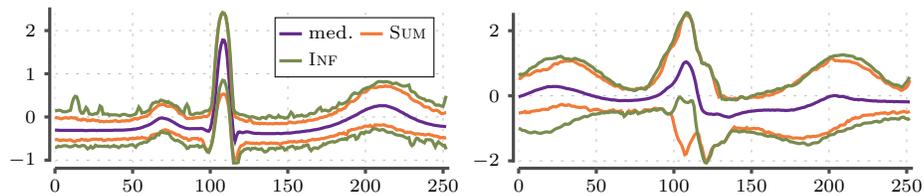
\begin{figure}[th!]

\begin{tikzpicture}
\begin{axis}[
	width = 5.2cm,
    height = 2cm,
    cycle list name=yaf,
    scale only axis,
    tick scale binop=\times,
    x tick label style = {/pgf/number format/set thousands separator = {\,}},
    y tick label style = {/pgf/number format/set thousands separator = {\,}},
    scaled ticks = false,
    legend pos = north east,
	legend entries = {med., \algsums, \alginfs},
	legend columns=2,
    no markers]
\addplot+[yafcolor1] table[x expr=\coordindex, y index = 0, header = false] {normal_ecg.env};
\addplot+[yafcolor2] table[x expr=\coordindex, y index = 1, header = false] {normal_ecg.env};
\addplot+[yafcolor2, forget plot] table[x expr=\coordindex, y index = 2, header = false] {normal_ecg.env};
\addplot+[yafcolor3] table[x expr=\coordindex, y index = 3, header = false] {normal_ecg.env};
\addplot+[yafcolor3, forget plot] table[x expr=\coordindex, y index = 4, header = false] {normal_ecg.env};
\pgfplotsextra{\yafdrawaxis{1}{253}{-1}{2.5}}
\end{axis}
\end{tikzpicture}
\begin{tikzpicture}
\begin{axis}[
	width = 5.2cm,
    height = 2cm,
    cycle list name=yaf,
    scale only axis,
    tick scale binop=\times,
    x tick label style = {/pgf/number format/set thousands separator = {\,}},
    y tick label style = {/pgf/number format/set thousands separator = {\,}},
    scaled ticks = false,
    no markers]
\addplot+[yafcolor1] table[x expr=\coordindex, y index = 0, header = false] {pvc_ecg.env};
\addplot+[yafcolor2] table[x expr=\coordindex, y index = 1, header = false] {pvc_ecg.env};
\addplot+[yafcolor2] table[x expr=\coordindex, y index = 2, header = false] {pvc_ecg.env};
\addplot+[yafcolor3] table[x expr=\coordindex, y index = 3, header = false] {pvc_ecg.env};
\addplot+[yafcolor3] table[x expr=\coordindex, y index = 4, header = false] {pvc_ecg.env};
\pgfplotsextra{\yafdrawaxis{1}{253}{-2}{2.5}}
\end{axis}
\end{tikzpicture}

\caption{Envelopes for \dtname{ECG-normal} (left) and \dtname{ECG-pvc} (right) and $k = \floor{0.9n}$.}
\label{fig:ecg}

\end{figure}
\textbf{Datasets:}
We used 4 real-world datasets as benchmark datasets.
The first dataset, \dtname{Milan}, consists of monthly averages of maximum daily temperatures in Milan
between the years 1763--2007.\!\footnote{\url{https://www.ncdc.noaa.gov/}}
The second dataset, \dtname{Power}, consists of hourly power consumption
(variable \texttt{global\_active\_power}) of a single
household over almost 4 years, a single time series representing a day.\!\footnote{\url{http://archive.ics.uci.edu/ml/datasets/Individual+household+electric+power+consumption}}
Our last 2 datasets \dtname{ECG-normal} and \dtname{ECG-pvc} are heart beat data~\citep{goldberger2000physiobank}.
We used MLII data of a single patient (id 106) from the MIT-BIH arrhythmia database,\!\footnote{\url{https://physionet.org/content/mitdb/1.0.0/}} and split the measurements
into normal beats (\dtname{ECG-normal}) and abnormal beats with premature ventricular contraction
(\dtname{ECG-pvc}). Each time series represent measurements between $-300$ms and $400$ms around each beat.
The sizes of the datasets are given in Table~\ref{tab:stats}.

\textbf{Results:}
First let us consider \algreg. From the results given Table~\ref{tab:stats} we
see that the number of distinct regularized bands $\abs{\mathcal{B}}$ is low:
about 4\%--7\% of $n$, the number of time series. Having so few bands in practice reduces
the computational cost of \algreg since the algorithm tests at most $2\abs{\mathcal{B}}$
values of $\alpha$
Interestingly, the smallest non-trivial band $B_1$ is typically large,
containing about 70\%--90\% of the time series. Note that Proposition~\ref{prop:sparsest} states that $B_1$ 
has the smallest ratio of $\onescore{B_1} / \abs{B_1}$. For our benchmark datasets, $B_1$ is large suggesting
that most time series are equally far away from the median while the remaining the time series exhibit outlier behaviour.

The algorithms are fast for our datasets: Table~\ref{tab:stats}
show that \algreg requires at most 40 seconds. Additional steps required by \algsum are negligible, completing in less than a second.
The baseline algorithm is also fast, requiring less than a second to complete.

Let us now compare \algsum against \algpeel. We compared the obtained areas by both algorithms with $k = \floor{0.9n}$ 
and  $k = \floor{0.95n}$.
We see from the results in Table~\ref{tab:stats}, that \algsum performs slightly better than \algpeel.
The improvement in score is modest, 1\%--2\%. We conjecture that in practice \algpeel is close
to the optimal, so any improvements are subtle. Interestingly, enough \algpeel performs better than \algsum
for \dtname{ECG-pvc} and $\gamma = 0.1$. The reason for this is that the inner band $B_1$ contains
more than 90\% of the time series. In such a case \algsum will reduce to a simple greedy method, starting
from $\set{x}$. Additional testing revealed that \algpeel outperforms \algsum when $k \leq \abs{B_1}$ about 50\%--90\%, depending on the dataset,
suggesting that whenever $k \leq \abs{B_1}$ it is probably better to run both algorithms and select the better envelope.

Next let us compare \alginf against the other methods. The results in
Tables~\ref{tab:stats}--\ref{tab:stats2} show that \alginf yields inferior
$\onescore{}$ scores but superior $\infscore{}$ scores. This is expected as
\alginf optimizes  $\infscore{}$ while \algsum and \algpeel optimize
$\onescore{}$. The differences are further highlighted in the envelopes for \dtname{ECG} datasets shown in Figure~\ref{fig:ecg}:
\alginf yields larger envelopes but provides a tighter bound under the peak (R wave).

\section{Concluding remarks}\label{sec:conclusions}

In this paper we consider the approximation algorithms for
discovering confidence bands. Namely, we proposed a practical algorithm
that approximates \prbsum with a guarantee of $\bigO{n^{1/2}}$.
We also argued that the lower bound for the guarantee is most likely
$\bigO{n^{1/4}}$. In addition, we showed that we can 2-approximate
\prbinf, a variant of \prbsum problem, with a simple algorithm and that the guarantee is tight.

Our experiments showed that \algsum outperforms the original baseline method for
large values of $k$, that is, as long as $k$ is larger than the smallest
regularized band.  Our experiments suggest that this condition usually  holds, if we are
interested, say in, 90\%--95\% confidence.

Interesting future line of work is to study the case for time series with multiple modes,
that is, a case where instead of a single seed time series, we are given a set of time series,
and we are asked to find confidence bands around each seed.

\bibliographystyle{splncsnat}
\bibliography{bibliography}

\end{document}